\relax
\documentclass[letterpaper]{article} 
\usepackage{aaai22}  
\usepackage{times}  
\usepackage{helvet}  
\usepackage{courier}  
\usepackage[hyphens]{url}  
\usepackage{graphicx} 
\urlstyle{rm} 
\usepackage{natbib}  
\usepackage{caption} 
\DeclareCaptionStyle{ruled}{labelfont=normalfont,labelsep=colon,strut=off} 
\frenchspacing  
\setlength{\pdfpagewidth}{8.5in}  
\setlength{\pdfpageheight}{11in}  
%
\usepackage{algorithm}
\usepackage{algorithmic}


%
\usepackage{newfloat}
\usepackage{listings}
\lstset{%
	basicstyle={\footnotesize\ttfamily},
	numbers=left,numberstyle=\footnotesize,xleftmargin=2em,
	aboveskip=0pt,belowskip=0pt,%
	showstringspaces=false,tabsize=2,breaklines=true}
\floatstyle{ruled}
\newfloat{listing}{tb}{lst}{}
\floatname{listing}{Listing}
%
%
\pdfinfo{
/Title (Optimizing Binary Decision Diagrams with MaxSAT for classification)
/Author (Hao Hu, Mohamed Siala, Marie-José Huguet)
}

\setcounter{secnumdepth}{1} 

%

\usepackage{amsmath}
\usepackage{amsthm}
\usepackage{multicol}
\usepackage{multirow}
\usepackage{xcolor,colortbl}


\newtheorem{proposition}{Proposition}
\newtheorem{example}{Example}

\usepackage{tikz}
\usetikzlibrary{shapes}
\usetikzlibrary{positioning}
\tikzset{
    roundnode/.style = {
        draw, 
        circle,
        minimum width=3mm,
        fill=white
    },
    squarednode/.style = {
        rectangle, 
        draw, 
        minimum width=5mm,
        fill=black!10 
    },
    nonode/.style = {
        circle,
        minimum width=5mm,
        fill=white
    },
    indicatornode/.style = {
        draw, 
        rectangle,
        minimum width = 10mm,
        fill=white
    },
    beadnodeone/.style = {
        draw,
        ellipse,
        minimum width=6mm,
        fill=white
    },
    beadnodetwo/.style = {
        draw,
        ellipse,
        minimum width=4mm,
        fill=white
    },
    beadnodethree/.style = {
        draw, 
        ellipse split,
        minimum width=6mm,
        fill=white
    }
}

\usepackage{afterpage}


\title{Optimizing Binary Decision Diagrams with MaxSAT for classification\footnote{This is the preprint version of the paper \cite{aaai-22}}}

\author{Hao Hu, Marie-José Huguet, Mohamed Siala
}
\affiliations{
LAAS-CNRS, Université de Toulouse, CNRS, INSA, Toulouse, France\\
\{hhu, huguet, siala\}@laas.fr
}

\begin{document}

\def\corr#1{\textcolor{red}{#1}} 
\def\corrhao#1{\textcolor{orange}{#1}}

\def\med#1{\textcolor{blue}{#1}}
\def\mjo#1{\textcolor{cyan}{#1}} 

\def\Sat{{SAT}}
\def\Cp{\texttt{CP}}
\def\Mip{\texttt{MIP}}
\def\MaxSat{{MaxSAT}}
\def\DT{\texttt{DT}}

\def\cart{\texttt{CART}}
\def\cff{\texttt{C4.5}}
\def\idt{\texttt{ID3}}
\def\dleight{\texttt{DL8.5}}
\def\Bdd{\texttt{BDD}}
\def\OODG{\texttt{OODG}}
\def\ODT{\texttt{ODT}}

\def\featset{\mathcal{F}}
\def\nbfeat{K}
\def\afeatind#1{f_{#1}}
\def\trainset{\mathcal{E}}
\def\nbtraindata{M}
\def\atraindataind#1{e_{#1}}
\def\atrainfeatind#1{x_{#1}}
\def\atrainfeatliteralind#1{\mathcal{L}_{#1}}
\def\aclassind#1{cl_{#1}}
\def\aclass{c}

\def\learnfunc{\gamma}
\def\actfunc{\phi}

\def\BddGraph{\mathcal{G}}
\def\nodeset{\mathcal{V}}
\def\boolfunc{g}
\def\node{v}
\def\root{\textit{root}}
\def\varset{\mathcal{X}}
\def\avarind#1{x_{#1}}
\def\nbvar{n}
\def\index#1{\textit{index($#1$)}}
\def\Left#1{\textit{left($#1$)}}
\def\Right#1{\textit{right($#1$)}}
\def\Value#1{\textit{value($#1$)}}

\def\Truthtable{\beta}
\def\Bead{\textit{bead}}
\def\Beads{\textit{beads}}
\def\Subtable{\textit{subtable}}
\def\Subtables{\textit{subtables}}
\def\DuplicateSubTT{\alpha}

\def\satvarset{\mathcal{X}}
\def\satvar{var}
\def\satdom{Dom}
\def\satvalue{val}

\def\depth{H}
\def\treesize{N}

\def\featplacevar#1#2{a_{#1}^{#2}}
\def\adecisionind#1{c_{#1}}
\def\signfeatexample#1#2{d_{#1}^{#2}}

\def\dirfunc{\textit{rel}}
\def\signfunc{\textit{sign}}

\def\boolfuncensemble#1#2{\mathcal{S}^{#1}_{#2}}

\newcommand{\comment}[1]{}

\maketitle

\begin{abstract}


The growing interest in {explainable artificial intelligence (XAI)} for critical decision making motivates the need for
interpretable machine learning (ML) models.
In fact, due to their structure (especially with small sizes), these models are inherently understandable by humans.
Recently, several exact methods for computing such models are proposed to overcome weaknesses of traditional heuristic methods 
by providing more compact models or better prediction quality.

Despite their compressed representation of Boolean functions, Binary decision diagrams (\Bdd{}s) did not gain enough interest
as other interpretable ML models.
In this paper, we first propose
SAT-based models for learning optimal \Bdd{}s (in terms of the number of features) that classify all input examples.
Then, we lift the encoding
to a MaxSAT model to learn optimal \Bdd{}s in limited depths,
that maximize the number of examples correctly classified. 
Finally, 
we tackle the fragmentation problem by
introducing a method to merge compatible subtrees for the \Bdd{}s found via the MaxSAT model. 
Our empirical study shows clear benefits of the proposed approach in terms of prediction quality and intrepretability (i.e., lighter size) compared to the state-of-the-art approaches. 



\end{abstract}

\section{Introduction}

\noindent
Due to
the increasing concerns in understanding the reasoning behind AI decisions for critical applications,
interpretable Machine Learning (ML) models gained a lot of attention.
Examples of such ML
applications include job recruitment, bank credit applications, and justice~\cite{EUlaw}. 
%
%
%
Most of traditional
approaches for building interpretable  models are greedy,
for example, decision trees~\cite{DBLP:books/wa/BreimanFOS84, 10.1023/A:1022643204877/Quinlan1986, quinlan1993c45},
rule lists~\cite{DBLP:conf/icml/Cohen95, 10.1007/BFb0017011}, and
decision sets~\cite{DBLP:conf/kdd/LakkarajuBL16}.
Compared to traditional approaches, 
exact methods offer guarantee of optimality, such as model size and accuracy.
In this context, 
combinatorial optimisation methods, such as Constraint Programming~\cite{Bonfietti2015, DBLP:journals/constraints/VerhaegheNPQS20}, Mixed Integer Programming~\cite{Angelino2018learning, aaai/Zhang19, DBLP:conf/aaai/AglinNS20}, or Boolean Satisfiablility (\Sat{})~\cite{bessiere-cp09, DBLP:conf/ijcai/NarodytskaIPM18, DBLP:conf/aaai/Avellaneda20, DBLP:conf/ijcai/Hu0HH20, DBLP:conf/sat/JanotaM20, DBLP:conf/cp/YuISB20} have been successfully used to learn interpretable models.
These declarative approaches are particularly interesting since they offer certain flexibility to handle additional requirements when learning a model. 

Decision Trees are widely used as a standard interpretable model.
However, 
they suffer from two major flaws: \textit{replication} and \textit{fragmentation}~\cite{89-ijcai-duplication,90-ml, Rokach-14-dm-dt-book}. The \textit{replication} problem appears when two identical subtrees are in the decision tree. 
The \textit{fragmentation} problem appears when only few samples are associated to leave nodes.
By providing compact representations for Boolean functions, 
Binary Decision Diagrams (\Bdd{}s)~\cite{78-bdd, 82-bdd, 86-bdd, KnuthTAOCP4} are widely studied for hardware design, model checking, and knowledge representation.
In the context of ML, 
\Bdd{} could be viewed as an intrepretable model for binary classification. In addition, they were extended for multi-classification, known as \textit{decision graphs} and
heuristic methods were proposed in~\cite{93-decisiongraph, DBLP:conf/ecml/Kohavi94, DBLP:conf/ijcai/KohaviL95, DBLP:conf/diagrams/MuesBFV04}. 
Moreover,~\cite{DBLP:conf/ictai/IgnatovI17} proposed \textit{decision stream}, a similar topology to \Bdd{} based on merging \textit{similar} subtrees in each split made in decision trees to improve the generalization.
 \cite{93-decisiongraph, DBLP:conf/ecml/Kohavi94} showed that \textit{decision graphs} could avoid the \textit{replication} problem and \textit{fragmentation} problem {of decision trees} effectively 
For binary classification, \Bdd{}s also could avoid these problems.
This fact indicates that generally in the practice of ML, a \Bdd{} have a smaller size than the corresponding decision tree.




In this paper, we introduce a \Sat{}-based model 
for learning optimal \Bdd{}s with the smallest number of features classifying all examples correctly,
and a lifted \MaxSat{}-based model to learn optimal \Bdd{}s minimizing the classification error.
We assume that all \Bdd{}s are \textit{ordered} and \textit{reduced}\footnote{The two notions are defined in the background}, the limitation on the depth for a \Bdd{}, corresponds to the number of features to be selected by our model.
To the best of our knowledge,~\cite{DBLP:conf/date/CabodiCI0PP21} is the only exact method of learning optimal \Bdd{}s in the context of ML.
The authors proposed a \Sat{} model to learn optimal 
\Bdd{}s with the smallest sizes that correctly classify all examples.
In their approach, the depth of the \Bdd{} is not restrained.
In fact, it is possible that the constructed \Bdd{} is small in size (number of nodes) and high in depth.
As the \Bdd{} is \textit{ordered}, this approach could not limit the number of features used,
making it not quite comparable with our proposition.
Another related work is in~\cite{DBLP:conf/ijcai/Hu0HH20} where the authors consider a \MaxSat{} model to learn optimal decision trees minimizing the classification error within a limited depth.
The usage of the same solving methodology with the same objective function and 
the depth limit,
makes these two \MaxSat\ models comparable.
Finally, in order to increase the scalability of our approach, we propose a heuristic extension based on a simple pre-processing step. 


The rest of the paper is organized as follows.
First, we give the related technical background in Section~\ref{sec:background}.
Then, in Section~\ref{sec:Sat-models}, we  show the proposed \Sat{} and \MaxSat{}
models for learning \Bdd s in binary classification. 
Finally in Section \ref{sec:exp}, we present our large experimental studies to show the competitive prediction quality of the proposed approach. 


\section{Technical Background}

\label{sec:background}



\subsection{Classification} 


Consider a dataset $\trainset =\{\atraindataind{q}, \dots, \atraindataind{\nbtraindata}\}$
with $\nbtraindata$ examples.
Each example $\atraindataind{q} \in \trainset$ is characterized
by a list of binary features 
$\atrainfeatliteralind{q} = [\afeatind{1}, \dots, \afeatind{\nbfeat} ]$
and a binary target $\aclassind{q}$, representing the class of the example ($\aclassind{q} \in \{0,1 \}$).
The data set is partitioned into $\trainset^+$ and $\trainset^-$, where $\trainset^+$ (respectfully $\trainset^-$) is the set of positive (respectifully negative) examples. That is, $\aclassind{q} = 1$ iff $\atraindataind{q} \in \trainset^+$ and $\aclassind{q}=0$ iff $\atraindataind{q} \in \trainset^-$.
%
%
We assume that, $\forall 1 \leq q, q' \leq \nbtraindata$,  
$\atrainfeatliteralind{q}=\atrainfeatliteralind{q'}$ implies $\aclassind{q} = \aclassind{q'}$.

Let $\actfunc$ be the function defined by 
$\actfunc(\atrainfeatliteralind{q})= \aclassind{q}$, $\forall q \in [1,\nbtraindata]$. 
The classification problem is to compute
a function $\learnfunc$ (called a \emph{classifier})
that matches as accurately as possible the function $\actfunc$ on examples $\atraindataind{q}$
of the training data and generalizes well on unseen test data. 

\subsection{Binary Decision Diagrams}
Binary Decision Diagrams (\Bdd{}s) are used to provide compact representation of Boolean functions. 
Let $[\avarind{1}, \dots, \avarind{\nbvar} ]$ 
be a sequence of 
of $\nbvar$ Boolean variables.
A \Bdd{} is a rooted, directed, acyclic graph $\BddGraph{}$.
The vertex set $\nodeset{}$ of $\BddGraph{}$ contains two types of vertices. 
A \emph{terminal} vertex $\node{}$ is associated to a binary value: \Value{\node{}} $\in \{0, 1\}$.
A \emph{nonterminal} vertex $\node{}$,  is associated to a Boolean variable $\avarind{i}$ and has two children \Left{\node{}}, \Right{\node{}} $\in \nodeset{}$. 
In this case, $\index{\node{}} =i \in \{1, \dots, \nbvar\}$ is the index of the Boolean variable associated to $\node{}$.

We assume that all \Bdd{}s are \emph{ordered} and \emph{reduced}.
These two restrictions are widely considered in the literature as they guarantee 
a \textit{unique} \Bdd{} for a given Boolean function.
The restriction \emph{ordered} indicates that for any \emph{nonterminal} vertex $\node{}$, 
\index{\node{}} $<$ \index{\Left{\node{}}} and \index{\node{}} $<$ \index{\Right{\node{}}}. 
The restriction \emph{reduced} indicates that the graph contains no \emph{nonterminal} vertex $\node{}$ with \Left{\node{}} $=$ \Right{\node{}}, 
nor does it contain distinct \emph{nonterminal} vertices $\node{}$ and $\node{}'$ 
{having isomorphic rooted sub-graphs.}
Therefore, given an \emph{ordered} \emph{reduced} {\Bdd{}} $\BddGraph{}$ with \root{} $\node{}$, the {associated} Boolean function 
can be recursively obtained with the Shannon expansion process~\cite{ShannonSymbolic}. 






Let $\boolfunc{}$ be a Boolean function defined over a sequence 
$\varset = [\avarind{1}, \dots, \avarind{\nbvar}]$ of $\nbvar$ Boolean variables.
The function $\boolfunc{}$ 
can be represented by a 
\emph{truth table} 
that lists 
the $2^\nbvar$ values of all assignments of the $\nbvar$ variables.
The value of the truth table is therefore associated to a string of $2^\nbvar$ binary values.
{A truth table $\Truthtable{}$ of length $2^\nbvar$ is said to be of order $\nbvar$.}
A truth table $\Truthtable{}$ of order $\nbvar > 0$ 
has the form $\Truthtable{}_0\Truthtable{}_1$, where $\Truthtable{}_0$ and $\Truthtable{}_1$ are truth tables of order $\nbvar -1$, and
$\Truthtable{}_0$ and $\Truthtable{}_1$ are called \Subtables{} of $\Truthtable{}$.
The \Subtables{} of \Subtables{} are also considered to be \Subtables{}, and a table is considered as a \Subtable{} of itself.
A \Bead{} of order $\nbvar$ is a truth table $T$ of order $\nbvar$ that {does not} have the form $\DuplicateSubTT{}\DuplicateSubTT{}$  
{where $\DuplicateSubTT{}$}
is a subtable of $T$.
The \Beads{} {of} $\boolfunc{}$ are the \Subtables{} of its truth table that happen to be \Beads{}.
Proposition \ref{prop:bead_bdd_node} from \cite{KnuthTAOCP4}  relates truth table and binary decision diagram 
for the same Boolean function.

\begin{proposition}
\label{prop:bead_bdd_node}
    All vertices in $\nodeset{}$ of a binary decision diagram $\BddGraph{}$,
    are in one-to-one correspondence with the \Beads{} of the Boolean function $\boolfunc{}$ it represents.
\end{proposition}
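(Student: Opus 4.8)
The plan is to exhibit an explicit bijection $\Phi$ from $\nodeset{}$ onto the set of \Beads{} of $\boolfunc{}$, where $\Phi(\node{})$ is the truth table of the Boolean function that $\node{}$ computes under the Shannon expansion read downward from $\node{}$. For a \emph{terminal} vertex $\node{}$ this function is the constant \Value{\node{}}, so $\Phi(\node{})$ is the one-bit table \Value{\node{}}, of order $0$. For a \emph{nonterminal} vertex $\node{}$ of index $i$ it is a function of $\avarind{i},\dots,\avarind{\nbvar}$, so $\Phi(\node{})$ has order $\nbvar-i+1$, and by construction $\Phi(\node{})$ is the concatenation of $\Phi(\Left{\node{}})$ and $\Phi(\Right{\node{}})$, its two \Subtables{} corresponding to the two values of $\avarind{i}$. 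I would then check, in this order: (a) $\Phi(\node{})$ is always a \Subtable{} of $\Truthtable{}$; (b) $\Phi$ is injective; (c) $\Phi(\node{})$ is always a \Bead{}; (d) $\Phi$ is surjective. I place (b) before (c) on purpose, since (c) is where the \emph{reduced} condition really bites and it is cleanest to invoke injectivity there.

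For (a), fix any path from the \root{} to $\node{}$. By the \emph{ordered} property the indices strictly increase along it, so the variables the path tests form a set $S\subseteq\{1,\dots,i-1\}$, where $i$ is the index of $\node{}$ (read $S\subseteq\{1,\dots,\nbvar\}$ when $\node{}$ is a terminal); moreover, wherever the path skips a variable the subfunction reached so far does not depend on that variable, so assigning it any value changes nothing. Hence the subfunction at $\node{}$ is the restriction of $\boolfunc{}$ that fixes $\avarind{j}$ to its path value for $j\in S$ and arbitrarily for $j\in\{1,\dots,i-1\}\setminus S$; that restriction is a \Subtable{} of $\Truthtable{}$ of order $\nbvar-i+1$, it does not depend on which path was chosen, and so $\Phi(\node{})$ is well defined. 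For (b), suppose $\Phi(\node{})=\Phi(\node{}')$. The two tables then have equal order, so either both vertices are terminal --- and then they carry the same value, hence coincide, since a reduced \Bdd{} has one terminal vertex per value --- or both are nonterminal with the same index $i$, in which case the left and right halves of the equal tables give $\Phi(\Left{\node{}})=\Phi(\Left{\node{}'})$ and $\Phi(\Right{\node{}})=\Phi(\Right{\node{}'})$; a routine downward induction on the order of the table then yields $\Left{\node{}}=\Left{\node{}'}$ and $\Right{\node{}}=\Right{\node{}'}$, so $\node{}$ and $\node{}'$ have identical rooted sub-graphs and are equal by the \emph{reduced} condition. (This step is self-contained: it re-derives the canonicity of reduced ordered \Bdd{}s directly from the two \emph{reduced} requirements.)

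For (c): an order-$0$ table cannot be written as $\DuplicateSubTT{}\DuplicateSubTT{}$, so every terminal yields a \Bead{}; and for a \emph{nonterminal} $\node{}$, the \emph{reduced} condition gives \Left{\node{}} $\neq$ \Right{\node{}}, hence by (b) $\Phi(\Left{\node{}})\neq\Phi(\Right{\node{}})$, i.e.\ the two halves of $\Phi(\node{})$ differ, so $\Phi(\node{})$ is not of the form $\DuplicateSubTT{}\DuplicateSubTT{}$ and is a \Bead{}. For (d): let $\Truthtable{}'$ be a \Bead{} of $\boolfunc{}$ of order $k$; it is the \Subtable{} of $\Truthtable{}$ selected by some assignment to $\avarind{1},\dots,\avarind{\nbvar-k}$. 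Follow the branch of $\BddGraph{}$ dictated by that assignment from the \root{} down, and let $\node{}$ be the first vertex reached whose index exceeds $\nbvar-k$, or the terminal reached if that happens first. By (a), $\node{}$ computes the function whose truth table over $\avarind{\nbvar-k+1},\dots,\avarind{\nbvar}$ is $\Truthtable{}'$. If $k=0$, then $\node{}$ is a terminal, since a \emph{nonterminal} vertex of a reduced \Bdd{} computes a non-constant function (again by (b) and \emph{reduced}), and $\Phi(\node{})=\Truthtable{}'$. If $k>0$, the unequal halves of the \Bead{} $\Truthtable{}'$ force $\node{}$ to actually test $\avarind{\nbvar-k+1}$ --- so $\node{}$ is a nonterminal of index exactly $\nbvar-k+1$ --- and again $\Phi(\node{})=\Truthtable{}'$. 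Thus $\Phi$ is onto, and the bijection is complete.

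I expect the main obstacle to be the bookkeeping in (a) caused by the \emph{reduced} property: because an edge may jump over several variables, a root-to-vertex path need not fix all the earlier variables, so one has to argue explicitly that the skipped ones are irrelevant before a vertex can be pinned to a single \Subtable{} of $\Truthtable{}$ and $\Phi$ declared well defined. The only other slightly delicate point is the downward induction in (b); once those are in place, the rest is a direct unwinding of the Shannon expansion and of the definition of a \Bead{}.
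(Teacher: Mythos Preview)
The paper does not actually prove this proposition; it is quoted from Knuth's \emph{The Art of Computer Programming}, Volume~4A, and stated without argument. So there is no in-paper proof to compare against, and your sketch is being judged on its own.

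Your plan is the standard one and is essentially sound, but one step needs tightening. You assert that $\Phi(\node{})$ is the concatenation of $\Phi(\Left{\node{}})$ and $\Phi(\Right{\node{}})$; this is literally true only when both children sit at index $i+1$. In a reduced diagram an edge may skip levels, so if \index{\Left{\node{}}} $=j>i+1$ then the left half of $\Phi(\node{})$ (which has order $\nbvar-i$) is $2^{\,j-i-1}$ concatenated copies of $\Phi(\Left{\node{}})$ (which has order $\nbvar-j+1$), not $\Phi(\Left{\node{}})$ itself. This bites in step (b): from equal halves you learn that $\Left{\node{}}$ and $\Left{\node{}'}$ compute the same function of $\avarind{i+1},\dots,\avarind{\nbvar}$, but not immediately that $\Phi(\Left{\node{}})=\Phi(\Left{\node{}'})$, because a priori the two children could carry different indices and hence tables of different orders. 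The fix is to fold the index determination into your downward induction --- prove that two vertices computing the same function of the tail variables must have the same index (or both be terminal) \emph{and} be equal --- which is exactly the canonicity of reduced ordered \Bdd{}s that you label ``routine''. Once that case analysis is made explicit, (b) and hence (c) go through cleanly; (a) and (d) are already fine, and you rightly flag the level-skipping bookkeeping for (a) in your final paragraph --- just carry that same care into (b).
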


Based on Proposition \ref{prop:bead_bdd_node}, 
we can produce the ordered and reduced  binary decision diagram of a Boolean function by finding its \Beads{} and combine its \Beads{} with its sequence of variables.
An algorithm for producing the corresponding \Bdd{} is provided in the Appendix.
\begin{example}
\label{exp:bdd_example}
Consider the Boolean function from~\cite{KnuthTAOCP4}: $\boolfunc{}_1(\avarind{1}, \avarind{2}, \avarind{3}) = (\avarind{1} \lor \avarind{2}) \land (\avarind{2} \lor \avarind{3}) \land (\avarind{1} \lor \avarind{3})$. 
The binary string associated to its
truth table $\Truthtable{}$ is $00010111$.
The \Beads{} of $\Truthtable{}$
are $\{00010111, 0001, 0111, 01, 0, 1\}$.

From Proposition \ref{prop:bead_bdd_node}, we can draw the \Bdd{} with the beads found, shown as the left part of Figure~\ref{fig:Ex1_bdd}.
The dashed (solid) line of each vertex indicates the left (right) child. 
Then, we can replace the beads by vertices associated with the sequence of Boolean variables.
The final binary decision diagram for $\boolfunc{}_1$ is shown as the right part of Figure~\ref{fig:Ex1_bdd}.
\end{example}
\begin{figure}[htb]
    \centering
    \begin{minipage}{0.54\linewidth}
        \centering
                \begin{tikzpicture}[scale=0.75, every node/.style={transform shape}]
        \node[beadnodeone]    (node1)     {00010111};
        \node[beadnodetwo]    (node2)     [below left = 1.5mm and 3mm of node1]     {0001};
        \node[beadnodetwo]    (node3)     [below right=1.5mm and 3mm of node1]     {0111};
            \node[beadnodetwo]    (node4)     [below left = 1.5mm and 7.5mm of node3]     {01};
            \node[squarednode]  (node5)     [below left = 1.5mm and 5mm of node4]     {0};
            \node[squarednode]  (node6)     [below right= 1.5mm and 5mm of node4]     {1};
            \draw           (node1) -- (node3);
            \draw [dashed]  (node1) -- (node2);
            \draw           (node2) -- (node4);
            \draw [dashed]  (node2) -- (node5);
            \draw           (node3) -- (node6);
            \draw [dashed]  (node3) -- (node4);
            \draw           (node4) -- (node6);
            \draw [dashed]  (node4) -- (node5);
        \end{tikzpicture}
    \end{minipage} \hfill
    \begin{minipage}{0.44\linewidth}
        \centering
        \begin{tikzpicture}[scale=0.75, every node/.style={transform shape}]
            \node[roundnode]    (node1)     {$x_{1}$};
            \node[roundnode]    (node2)     [below left = 1.5mm and 5mm of node1]     {$x_{2}$};
            \node[roundnode]    (node3)     [below right= 1.5mm and 5mm of node1]     {$x_{2}$};
            \node[roundnode]    (node4)     [below left = 1.5mm and 5mm of node3]     {$x_{3}$};
            \node[squarednode]  (node5)     [below left = 1.5mm and 5mm of node4]     {0};
            \node[squarednode]  (node6)     [below right= 1.5mm and 5mm of node4]     {1};
            \draw           (node1) -- (node3);
            \draw [dashed]  (node1) -- (node2);
            \draw           (node2) -- (node4);
            \draw [dashed]  (node2) -- (node5);
            \draw           (node3) -- (node6);
            \draw [dashed]  (node3) -- (node4);
            \draw           (node4) -- (node6);
            \draw [dashed]  (node4) -- (node5);
        \end{tikzpicture}
    \end{minipage}
    \caption{\footnotesize{The Binary decision diagram for $\boolfunc{}_1(\avarind{1}, \avarind{2}, \avarind{3})$}}
    \label{fig:Ex1_bdd}
\end{figure}


\subsection{Oblivious Read-Once Decision Graphs}
Oblivious Read-Once Decision Graphs (\OODG{}s) 
are proposed in~\cite{DBLP:conf/ecml/Kohavi94} to overcome some limitations of
decision trees for multi-classification, like \textit{replication} and \textit{fragmentation} problem.
We refer the readers to \cite{DBLP:conf/ijcai/KohaviL95, DBLP:conf/ecml/Kohavi94} for details on \OODG{}s.
An \OODG{} is a rooted, directed, acyclic graph, which contains terminal \textit{category nodes} labelled with classes to make decisions, and non-terminal \textit{branching nodes} labelled with features to make splits.
The property ``\textit{read-once}'' indicates that each feature occurs at most once along any path from the root to a category node. 
The property ``\textit{levelled}'' indicates that the nodes are partitioned into a sequence of pairwise disjoint sets, representing the levels, such that outgoing edges from each level terminate 
at the next level.
The property
``\textit{oblivious}'' extends the idea of ``\textit{levelled}'' by guaranteeing that all nodes at a given level are labelled by the same feature.

For the classification process, top-down and bottom-up heuristic methods for building \OODG{}s are proposed in~\cite{DBLP:conf/ijcai/KohaviL95, DBLP:conf/ecml/Kohavi94}. 
Here, we introduce briefly the top-down heuristic method,
which is similar to the heuristic methods \cff{} and \cart{} for computing decision trees.
The top-down heuristic induction for \OODG{} with given depth
contains three critical phases: 
(1) selecting a sequence of features with the help of \textit{mutual information} (the difference of \textit{conditional entropy} \cite{elementInformationtheory});
(2) growing an oblivious decision tree (\ODT{}) by splitting the dataset with features in the sequence selected;
and (3) merging \textit{isomorphic} and \textit{compatible} subtrees from top to down 
to build the \OODG{}.
When building the \ODT{}, the algorithm marks nodes that capture no example of the dataset as ``\textit{unknwon}''.
For the merging phase, two subtrees are \textit{compatible} if at least one root is labelled as ``\textit{unknown}'', or if the two root nodes are labelled with same feature and their corresponding children are the roots of compatible subtrees.
The \ODT{} grown could make classifications directly by assigning ``\textit{unknown}'' nodes with the majority class of their parents.

\begin{figure}[htb]
    \centering
    \begin{minipage}{0.21\linewidth}
        \centering
        \begin{tikzpicture}[scale=0.72, every node/.style={transform shape}]
            \node[roundnode]    (node1)     {$\afeatind{1}$};
            \node[squarednode]  (node2)     [below left = 4mm and 1mm of node1]     {u};
            \node[roundnode]    (node3)     [below right= 4mm and 0.5mm of node1]     {$\afeatind{2}$};
            \node[squarednode]    (node4)   [below left = 6mm and 10mm of node3]     {1};
            \node[squarednode]  (node5)     [below left = 6.5mm and 2mm of node3] {u};
            \node[squarednode]  (node6)     [below = 5mm of node3]     {1};
            \draw  (node1) -- (node3) node [midway, above, sloped] (TextNode) {yes} ;
            \draw  (node1) -- (node2) node [midway, above, sloped] (TextNode) {no} ;
            \draw  (node3) -- (node4) node [midway, above, sloped] (TextNode) {$v_1$} ;
            \draw  (node3) -- (node5) node [midway, above, sloped] (TextNode) {$v_2$} ;
            \draw  (node3) -- (node6) node [midway, above, rotate=-90] (TextNode) {$v_3$} ;            
\end{tikzpicture}
    \end{minipage} 
    \begin{minipage}{0.375\linewidth}
        \centering
        \begin{tikzpicture}[scale=0.72, every node/.style={transform shape}]
            \node[roundnode]    (node1)     {$\afeatind{1}$};
            \node[roundnode]  (node2)     [below left = 4mm and 5.7mm of node1]     {$\afeatind{2}$};
            \node[roundnode]    (node3)     [below right= 4mm and 5.7mm of node1]     {$\afeatind{2}$};
            \node[squarednode]    (node4)   [below left = 6mm and 2mm of node2]     {1};
            \node[squarednode]  (node5)     [below = 5mm of node2] {1};
            \node[squarednode]  (node6)     [below right = 6mm and 2mm of node2]     {1};
            \node[squarednode]  (node7)     [below left= 6mm and 2mm of node3]     {1};
            \node[squarednode]  (node8)     [below = 5mm of node3]     {0};
            \node[squarednode]  (node9)     [below right= 7mm and 2mm of node3]     {u};
            \draw  (node1) -- (node3) node [midway, above, sloped] (TextNode) {yes} ;
            \draw  (node1) -- (node2) node [midway, above, sloped] (TextNode) {no} ;
            \draw  (node2) -- (node4) node [midway, above, sloped] (TextNode) {$v_1$} ;
            \draw  (node2) -- (node5) node [midway, above, rotate=-90] (TextNode) {$v_2$} ;
            \draw  (node2) -- (node6) node [midway, above, sloped] (TextNode) {$v_3$} ;
            \draw  (node3) -- (node7) node [midway, above, sloped] (TextNode) {$v_1$} ;
            \draw  (node3) -- (node8) node [midway, above, rotate=-90] (TextNode) {$v_2$} ;
            \draw  (node3) -- (node9) node [midway, above, sloped] (TextNode) {$v_3$} ;
\end{tikzpicture}
    \end{minipage} \vline\hfill
    \begin{minipage}{0.38\linewidth}
        \centering
        \begin{tikzpicture}[scale=0.72, every node/.style={transform shape}]
            \node[roundnode]    (node1)     {$\afeatind{1}$};
            \node[roundnode]  (node2)     [below left = 4mm and 5.7mm of node1]     {$\afeatind{2}$};
            \node[roundnode]    (node3)     [below right= 4mm and 5.7mm of node1]     {$\afeatind{2}$};
            \node[squarednode]    (node4)   [below left = 6mm and 2mm of node2]     {1};
            \node[squarednode]  (node5)     [below = 5mm of node2] {1};
            \node[squarednode]  (node6)     [below right = 6mm and 2mm of node2]     {1};
            \node[squarednode]  (node7)     [below left= 6mm and 2mm of node3]     {1};
            \node[squarednode]  (node8)     [below = 5mm of node3]     {0};
            \node[squarednode]  (node9)     [below right= 6.2mm and 2mm of node3]     {1};
            \draw  (node1) -- (node3) node [midway, above, sloped] (TextNode) {yes} ;
            \draw  (node1) -- (node2) node [midway, above, sloped] (TextNode) {no} ;
            \draw  (node2) -- (node4) node [midway, above, sloped] (TextNode) {$v_1$} ;
            \draw  (node2) -- (node5) node [midway, above, rotate=-90] (TextNode) {$v_2$} ;
            \draw  (node2) -- (node6) node [midway, above, sloped] (TextNode) {$v_3$} ;
            \draw  (node3) -- (node7) node [midway, above, sloped] (TextNode) {$v_1$} ;
            \draw  (node3) -- (node8) node [midway, above, rotate=-90] (TextNode) {$v_2$} ;
            \draw  (node3) -- (node9) node [midway, above, sloped] (TextNode) {$v_3$} ;
\end{tikzpicture}
    \end{minipage}
    \caption{\footnotesize{An example of two compatible subtrees (the left two) and the merged tree (the right one) from~\cite{DBLP:conf/ijcai/KohaviL95}}}
    \label{fig:Ex4_compatible_oodg}
\end{figure}
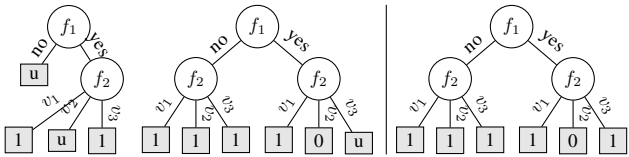


Figure \ref{fig:Ex4_compatible_oodg} shows an example of two compatible subtrees and the merged tree, where ``\textit{unknown}'' nodes are labelled as ``\textit{u}''.
Merging compatible subtrees changes the bias by assuming that a ``\textit{unknown}'' node is likely to behave the same as another child if they belong to compatible subtrees.

In binary classification for binary datasets, \OODG{}s could be considered \textit{equivalent} to \Bdd{}s, as the properties ``\textit{oblivious}'' and ``\textit{read-once}'' for \OODG{}s are same as property ``\textit{ordered}'' for \Bdd{}s.
In addition, the use of merging compatible subtrees could also be applied for \Bdd{}s.


\subsection{SAT \& MaxSAT}

We use standard terminology for Boolean Satisfiabily~\cite{DBLP:series/faia/2009-185}. A \emph{literal} is a Boolean variable or its negation, and a \emph{clause} is a disjunction of literals.
An assignment of variables satisfies a clause if one of its literals is true.
Given a set of Boolean variables 
and a set of clauses defined over these variables,
the \Sat{} problem can be defined as finding an assignment of the variables such that all the clauses are satisfied. 
Maximum Satisfiability (\MaxSat{}) is an optimization version of the \Sat{} problem, where 
the clauses are partitioned into \emph{hard} and \emph{soft} clauses.
Here we consider the Partial \MaxSat{} problem, that is to find an assignment of the Boolean variables that satisfies all the hard clauses and maximizes the number of satisfied soft clauses.

\section{(Max)SAT-based model for Binary Decision Diagrams}
\label{sec:Sat-models}
%
In this section, we present our approach for learning \Bdd {}s for binary classification using \Sat{} and \MaxSat{}.

\subsection{Problem Definition}
\label{sec:problemdefinition}

We firstly consider the following decision problem for classification with \Bdd{} in a given depth. 
\begin{itemize}
    \item $P_{{bdd}}(\trainset, \depth):$ \emph{Given a set of examples $\trainset$, is there a \Bdd\ of depth $\depth$ that classifies correctly all examples in $\trainset$?}
\end{itemize}
Notice that the algorithm for $P_{bdd}(\trainset, \depth)$ can be used to the alternative problem of optimizing a \Bdd{} that classifies all examples in the dataset correctly with a minimum depth. 
For that purpose, one can use a linear search that takes 
an initial depth $\depth_0$ as input 
and progressively increases or decreases this value depending on the result of solving $P_{bdd}(\trainset, \depth)$. 

Next, we consider another optimization problem for the classification with \Bdd{} 
in a limited depth.
\begin{itemize}
    \item $P_{{bdd}}^*(\trainset, \depth):$ \emph{Given a set of examples $\trainset$, find a \Bdd\ of depth $\depth$ that maximises the number of examples in $\trainset$ that are correctly classified.}
\end{itemize}

We propose an initial \Sat\ model for the decision problem $P_{bdd}(\trainset, \depth)$.
Then, we propose an improved version in tighter formula size.
Finally, we show how the improved \Sat{} model for $P_{bdd}(\trainset, \depth)$  can be used effectively to solve
the optimization problem $P_{{bdd}}^*(\trainset, \depth)$ with \MaxSat{}.



\subsection{SAT Model for $P_{bdd}(\trainset, \depth)$}

As shown in Section~\ref{sec:background}, a \Bdd{} of depth $\depth$ could be generated from the combination of 
a sequence of Boolean variables of size $\depth$: $[\avarind{1}, \dots, \avarind{\depth}]$, and a truth table of order $\depth$ associated to a Boolean function.
To solve the classification problem $P_{bdd}(\trainset, \depth)$, 
we then have to find a sequence of binary features of size $\depth$
that maps one-to-one the
sequence of Boolean variables, and a truth table
associated to a Boolean function that well-classified all examples.
We denote the sequence of binary features found as \textit{feature ordering}.
%
%
Therefore, the \Sat\ encoding consists of two parts:

\begin{itemize}
    \item \textbf{Part 1:} Constraints for selecting features of the dataset into the feature ordering of size $\depth$.
    \item \textbf{Part 2:} Constraints for generating a truth table that classifies all examples of $\trainset$ correctly with the selected feature ordering. 
\end{itemize}

To realize the \Sat{} encoding, we introduce two sets of Boolean variables as follow:
\begin{itemize}
    \item 
    $\featplacevar{r}{i}$: 
    the variable $\featplacevar{r}{i}$ is $1$ iff feature $\afeatind{r}$ is selected as $i$-th feature in the feature ordering, 
    where $i=1,\dots, \depth$, $r=1,\dots,\nbfeat$.
    \item 
    $\adecisionind{j}$: 
    the variable $\adecisionind{j}$ is 1 iff the $j$-th 
    value of the truth table is 1, where $j=1,\dots,2^{\depth}$.
\end{itemize}


The set of variables $\featplacevar{r}{i}$ guarantees the \textit{ordered} restriction.
Then, we introduce two constraints (\ref{con:feature_used_atmost_once}) and~(\ref{con:exact_one_feature})
for the feature ordering. 
Constraint~\ref{con:feature_used_atmost_once} ensures that any feature $\afeatind{r}$ can be selected at most once.
\begin{equation}
    \label{con:feature_used_atmost_once}
    \sum_{i=1}^{\depth}\featplacevar{r}{i} \leq 1, \quad r=1,\dots,\nbfeat
\end{equation}
Then, there is exacty one feature selected for each index of the feature ordering.
\begin{equation}
    \label{con:exact_one_feature}
    \sum_{r=1}^{\nbfeat}\featplacevar{r}{i} = 1, \quad i = 1,
    \dots, \depth
\end{equation}

We use the classical sequential counter encoding proposed in~\cite{05-sequential} to model constraints~(\ref{con:feature_used_atmost_once}) and~(\ref{con:exact_one_feature}) as a Boolean formula.

The truth table we are looking for is the binary string of the values of variables
$\adecisionind{1}\adecisionind{2}\dots\adecisionind{2^\depth}$.
To avoid the first feature selected makes useless split, we need to make sure that the truth table is a \Bead{}.
\begin{equation}
    \label{con:root_is_bead}
    \bigvee_{j=1}^{2^{\depth-1}}(\adecisionind{j} \oplus \adecisionind{j + 2^{\depth-1}})
\end{equation}

There is a relationship between the values of a truth table and the assignments of the given sequence of Boolean variables.
For example, the first value of a truth table corresponds to the assignment that 
$\avarind{1} = 0$ and $\avarind{2} =0$.
%
Therefore, we define the following function to obtain the value of the $i$-th feature in the feature ordering of size $\depth$ given the $j$-th value in the truth table.

\begin{equation}
    \label{func:direction_func}
    \dirfunc{}(i, j) = \lfloor \frac{j-1}{2^{\depth-i}} \rfloor \bmod 2,  \quad i \in [1, \depth], j \in [1, 2^\depth]
\end{equation}


For an example $\atraindataind{q} \in \trainset$, we denote the value of the feature $\afeatind{r}$ as $\sigma(r, q)$. 
If $\dirfunc{}(i, j) = \sigma(r,q)$, it indicates that for example $\atraindataind{q}$, the feature $\afeatind{r}$ can be at the $i$-th position in the feature ordering to produce the $j$-th value in the truth table.
To classify all examples correctly, we ensure
that 
no example follows an assignment 
in the truth table leading to its opposite class. 
Thus, we propose the following constraints for classification.
Let $\atraindataind{q} \in \trainset^{+}$, for all $j = 1, \dots, 2^\depth$:
\begin{equation}
    \label{con:original_classify_pos}
    \neg \adecisionind{j} \rightarrow \bigvee_{i=1}^{\depth}\bigvee_{r=1}^{\nbfeat}(\featplacevar{r}{i} \land \dirfunc{}(i, j) \oplus \sigma(r, q))
\end{equation}

That is, for every positive example $\atraindataind{q}$, any variable $\adecisionind{j}$ assigned to $0$ must be associated to an assignment of features that contains at least one feature-value that is not coherent with $\atraindataind{q}$. 
For negative examples, we use a similar idea. 
Let $\atraindataind{q} \in \trainset^{-}$, for all $j = 1, \dots, 2^\depth$:
\begin{equation}
    \label{con:original_classify_neg}
    \adecisionind{j} \rightarrow \bigvee_{i=1}^{\depth}\bigvee_{r=1}^{\nbfeat}(\featplacevar{r}{i} \land \dirfunc{}(i, j) \oplus \sigma(r, q))
\end{equation}

\begin{table}[htb]
    \centering
    \begin{minipage}{0.72\linewidth}
        \centering
        \scalebox{0.7}{
        \footnotesize{}
        \begin{tabular}{|c||cccc||c|}
            \hline
            \textbf{$\trainset_0$} & \boldmath{$f_{1}$} & \boldmath{$f_{2}$} & \boldmath{$f_{3}$} & \boldmath{$f_{4}$} & \textbf{label} \\\hline
            $e_1$ & 1 & 0 & 1 & 0 & 0 \\\hline
            $e_2$ & 1 & 0 & 0 & 1 & 0 \\\hline
            $e_3$ & 0 & 0 & 1 & 0 & 1 \\\hline
            $e_4$ & 1 & 1 & 0 & 0 & 0 \\\hline
            $e_5$ & 0 & 0 & 0 & 1 & 1 \\\hline
            $e_6$ & 1 & 1 & 1 & 1 & 0 \\\hline
            $e_7$ & 0 & 1 & 1 & 0 & 0 \\\hline
            $e_8$ & 0 & 0 & 1 & 1 & 1 \\\hline
        \end{tabular}
        }
        \caption{\footnotesize{A dataset for binary classification}}
        \label{tab:dataset_example}
    \end{minipage}\hfill
    \begin{minipage}{0.26\linewidth}
        \centering
        \begin{tikzpicture}[scale=0.7, every node/.style={transform shape}]
    \node[roundnode]    (node1)     {$f_{1}$};
    \node[roundnode]    (node2)     [below left = 4mm and 2mm of node1]     {$f_{2}$};
    \node[squarednode]  (node3)     [below right = 4mm and 2mm of node1]                  {0};
    \node[squarednode]  (node4)     [below left= 4mm and 2mm of node2]     {1};
    \node[squarednode]  (node5)     [below right= 4mm and 2mm of node2]     {0};
    \draw [dashed]  (node1) -- (node2);
    \draw           (node1) -- (node3);
    \draw [dashed]  (node2) -- (node4);
    \draw           (node2) -- (node5);
\end{tikzpicture}
        
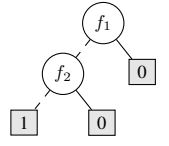
\captionof{figure}{\footnotesize{Decision Tree found}}
        \label{fig:dt_constructed}
    \end{minipage}
\end{table}

\begin{example}
\label{exp:bdd_classify_all_examples_orig}
Let $\trainset_0$ be the given set of examples shown in Table \ref{tab:dataset_example}. 
Figure \ref{fig:dt_constructed} shows the corresponding decision tree classifying all examples correctly. 
We consider 
to encode a \Bdd{} with depth $\depth=2$ classifying all examples of $\trainset_0$ correctly.
The two sets of variables are: $\{\featplacevar{1}{1}, \featplacevar{1}{2}, \featplacevar{2}{1}, \featplacevar{2}{2}, \featplacevar{3}{1}, \featplacevar{3}{2}, \featplacevar{4}{1}, \featplacevar{4}{2}\}$, and $\{\adecisionind{1}, \adecisionind{2}, \adecisionind{3}, \adecisionind{4}\}$.
The constraints \ref{con:feature_used_atmost_once}, \ref{con:exact_one_feature}, and \ref{con:root_is_bead} are:

\begin{align*}
    \centering
    \featplacevar{1}{1} + \featplacevar{1}{2} \leq 1, \quad
    \featplacevar{2}{1} + \featplacevar{2}{2} &\leq 1,  \quad
    \featplacevar{3}{1} + \featplacevar{3}{2} \leq 1, \quad
    \featplacevar{4}{1} + \featplacevar{4}{2} \leq 1 \\
    \featplacevar{1}{1} + \featplacevar{2}{1} + \featplacevar{3}{1} + \featplacevar{4}{1} &= 1, \quad
    \featplacevar{1}{2} + \featplacevar{2}{2} + \featplacevar{3}{2} + \featplacevar{4}{2} = 1 \\
    (\adecisionind{1} \oplus &\adecisionind{3}) \lor (\adecisionind{2} \oplus \adecisionind{4})
\end{align*}


For classification constraints (i.e.,~\ref{con:original_classify_pos} and~\ref{con:original_classify_neg}), 
we show the encoding of $\atraindataind{1} \in \trainset^-$ with for value $\adecisionind{1}$. The encoding for other examples and other values is similar.

\begin{equation*}
    \begin{split}
\adecisionind{1} \rightarrow (\featplacevar{1}{1} \land 0 \oplus 1) &\lor (\featplacevar{2}{1} \land 0 \oplus 0) \lor  
(\featplacevar{3}{1} \land 0 \oplus 1) \\ \lor (\featplacevar{4}{1} \land 0 \oplus 0) &\lor 
    (\featplacevar{1}{2} \land 0 \oplus 1) \lor (\featplacevar{2}{2} \land 0 \oplus 0) \\ \lor 
    (\featplacevar{3}{2} \land 0 \oplus 1) &\lor (\featplacevar{4}{2} \land 0 \oplus 0)
    \end{split}
\end{equation*}

This could be simplified as follow:
\begin{equation*}
    \neg \adecisionind{1} \lor \featplacevar{1}{1} \lor
    \featplacevar{3}{1} \lor \featplacevar{1}{2} \lor \featplacevar{3}{2}
\end{equation*}

\begin{table}[htb]
    \centering
    \begin{minipage}{0.59\linewidth}
        \centering
        \scalebox{0.7}{
        \begin{tabular}{|cc||c|}
            \hline
            $\avarind{1}=\afeatind{1}$ & $\avarind{2}=\afeatind{2}$ & \\\hline
            $0$ & $0$ & $\adecisionind{1}=1$ \\\hline
            $0$ & $1$ & $\adecisionind{2}=0$ \\\hline
            $1$ & $0$ & $\adecisionind{3}=0$ \\\hline
            $1$ & $1$ & $\adecisionind{4}=0$ \\\hline
        \end{tabular}
        }
        \caption{\footnotesize{Truth table solution for \texttt{BDD} of depth 2 classifying all example of $\trainset_0$}}
        \label{tab:Bdd-solution}
    \end{minipage}\hfill
    \begin{minipage}{0.39\linewidth}
        \centering
        \begin{tikzpicture}[scale=0.7, every node/.style={transform shape}]
    \node[roundnode]    (node1)     {$f_{1}$};
    \node[roundnode]    (node2)     [below left = 3mm and 5mm of node1]     {$f_{2}$};
    \node[squarednode]  (node3)     [below = 10mm of node1]                  {0};
    \node[squarednode]  (node4)     [below left= 3mm and 5mm of node2]     {1};
    \draw [dashed]  (node1) -- (node2);
    \draw           (node1) -- (node3);
    \draw [dashed]  (node2) -- (node4);
    \draw           (node2) -- (node3);
\end{tikzpicture}
        \captionof{figure}{\footnotesize{The \texttt{BDD} found}}
        \label{fig:bdd_constructed}
    \end{minipage}
\end{table}
The values of truth table found by the \Sat{} model are shown in Table~\ref{tab:Bdd-solution}, the feature ordering is $[\afeatind{1}, \afeatind{2}]$. 
Moreover, Table \ref{tab:Bdd-solution} illustrates the relationship between the values of truth table and the assignments of the given sequence Boolean variable of size $2$.
Figure \ref{fig:bdd_constructed} shows 
the corresponding \Bdd{}.
This \Bdd{} classifies all examples of the dataset $\trainset_0$ correctly,
also provides more compact representation than the decision tree shown in Figure \ref{fig:dt_constructed}.

\end{example}


We refer to this first \Sat\ encoding for $P_{bdd}(\trainset, \depth)$ as \Bdd1. The size of \Bdd1 is given in Proposition~\ref{prop:model_size_bdd_orig}.

\begin{proposition}
\label{prop:model_size_bdd_orig}
For a $P_{bdd}(\trainset, \depth)$ problem with $\nbfeat$ binary features and $\nbtraindata{}$ examples, the encoding size (in terms of the number of literals used in the different clauses) of \Bdd1\ is $O(\nbtraindata{} \times \depth \times \nbfeat \times 2^\depth)$.  
\end{proposition}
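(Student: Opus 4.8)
The plan is to bound the number of literal occurrences contributed by each constraint group (\ref{con:feature_used_atmost_once})--(\ref{con:original_classify_neg}) separately, and then observe that the classification constraints (\ref{con:original_classify_pos})--(\ref{con:original_classify_neg}) dominate. Recall that \Bdd1 introduces the variables $\featplacevar{r}{i}$ ($\nbfeat \times \depth$ of them) and $\adecisionind{j}$ ($2^\depth$ of them), together with whatever auxiliary variables are created by the sequential counter encoding of the cardinality constraints and by the clausal expansion of the XOR expressions; the ``size'' is the total number of literal occurrences over all clauses.

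First, the feature-ordering part. Constraint~(\ref{con:feature_used_atmost_once}) is a family of $\nbfeat$ at-most-one constraints, each over $\depth$ variables, and the sequential counter encoding of~\cite{05-sequential} turns each into $O(\depth)$ clauses of bounded width, hence $O(\depth)$ literals, for a total of $O(\nbfeat \times \depth)$. Constraint~(\ref{con:exact_one_feature}) is a family of $\depth$ exactly-one constraints over $\nbfeat$ variables each, contributing $O(\nbfeat)$ literals per constraint and $O(\depth \times \nbfeat)$ in total. Constraint~(\ref{con:root_is_bead}) is a single disjunction of $2^{\depth-1}$ XOR terms $\adecisionind{j} \oplus \adecisionind{j+2^{\depth-1}}$; replacing each XOR by a fresh variable with its constantly many defining clauses and collecting these fresh variables in one large clause costs $O(1)$ literals per term plus $O(2^\depth)$ for the collecting clause, i.e.\ $O(2^\depth)$ in total.

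The dominant contribution comes from constraints~(\ref{con:original_classify_pos}) and~(\ref{con:original_classify_neg}). The key observation, which I would state explicitly, is that for fixed $i,j,r,q$ the value $\dirfunc(i,j) \oplus \sigma(r,q)$ is a constant, so each conjunct $\featplacevar{r}{i} \land (\dirfunc(i,j) \oplus \sigma(r,q))$ collapses to either $\featplacevar{r}{i}$ or $\bot$. Hence each implication~(\ref{con:original_classify_pos})/(\ref{con:original_classify_neg}) becomes a single clause of the form $\adecisionind{j} \lor \bigvee \featplacevar{r}{i}$ (with the premise literal possibly negated) containing at most $1 + \depth \times \nbfeat$ literals. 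There are $\nbtraindata$ examples and $2^\depth$ values of $j$, so these constraints contribute $O(\nbtraindata \times 2^\depth \times \depth \times \nbfeat)$ literals. Summing the four contributions, the total encoding size is $O(\nbfeat\depth) + O(\depth\nbfeat) + O(2^\depth) + O(\nbtraindata \times \depth \times \nbfeat \times 2^\depth) = O(\nbtraindata \times \depth \times \nbfeat \times 2^\depth)$, as claimed.

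The step I expect to require the most care is not any single calculation but keeping the accounting honest about auxiliary variables: one must confirm that the sequential counter encoding is genuinely linear in the number of its input variables (so that groups (\ref{con:feature_used_atmost_once})--(\ref{con:exact_one_feature}) stay within $O(\nbfeat\depth)$) and that the clausal forms chosen for the XORs and for the implications in (\ref{con:original_classify_pos})--(\ref{con:original_classify_neg}) do not widen the clauses; once the constant-ness of $\dirfunc(i,j) \oplus \sigma(r,q)$ is used to flatten the double disjunction into a single flat clause, the bound follows immediately.
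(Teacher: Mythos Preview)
Your proposal is correct and follows essentially the same decomposition as the paper: bound constraints~(\ref{con:feature_used_atmost_once})--(\ref{con:exact_one_feature}) by $O(\depth\nbfeat)$, constraint~(\ref{con:root_is_bead}) by $O(2^\depth)$, and identify constraints~(\ref{con:original_classify_pos})--(\ref{con:original_classify_neg}) as the dominant $O(\nbtraindata \times 2^\depth)$ clauses of width $O(\depth\nbfeat)$ each. You are simply more explicit than the paper about why each classification implication flattens into a single clause (the constantness of $\dirfunc(i,j)\oplus\sigma(r,q)$, which the paper only illustrates in Example~\ref{exp:bdd_classify_all_examples_orig}) and about the linear overhead of the sequential counter and XOR auxiliaries.
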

\begin{proof}
Notice first that $j$ ranges from $1$ to $2^\depth$, $i$ ranges from $1$ to $\depth$, and $r$ ranges from $1$ to $\nbfeat$. 
The term $\nbtraindata{} \times 2^\depth$ results from constraint (\ref{con:original_classify_pos}) and (\ref{con:original_classify_neg}), each contains $O(\depth \times \nbfeat)$ literals.
For the remaining constraints, it is $O(\depth \times \nbfeat)$ for constraints~(\ref{con:feature_used_atmost_once}) 
and~(\ref{con:exact_one_feature}), $O(2^\depth)$ for constraint~(\ref{con:root_is_bead}).
\end{proof}

The size of \Bdd1 is quite huge due to the size of clauses
generated by constraints (\ref{con:original_classify_pos}) and  (\ref{con:original_classify_neg}) for classification.
This makes \Bdd1 impractical in practice. 

\subsection{An improved SAT Model for $P_{{bdd}}(\trainset, \depth)$ }


In order to reduce the size of \Bdd{1}, we propose new classification constraints to replace constraints (\ref{con:original_classify_pos}) and (\ref{con:original_classify_neg}).
The idea is that every positive (respectivery negative) example follows an assignment leading to a positive (respectively negative) value of the truth table.
We introduce a new set of Boolean
variables:

\begin{itemize}
    \item 
    $\signfeatexample{i}{q}$: The variable $\signfeatexample{i}{q}$ is 1 iff for example $\atraindataind{q}$ the value of the
    $i$-th feature selected in feature ordering is 1, where $i=1,\dots,\depth$, $q=1, \dots, \nbtraindata{}$.
\end{itemize}

Then, We describe constraints that relate the values
of features for each example $\atraindataind{q} \in \trainset$, for $i=1,\dots, \depth$, $r=1,\dots, \nbfeat$:
\begin{equation}
    \label{con:sign_features}
    \begin{split}
        \featplacevar{r}{i} &\rightarrow \signfeatexample{i}{q} \quad\;\;\, \text{if } \sigma(q, r) = 1 \\
        \featplacevar{r}{i} &\rightarrow \neg \signfeatexample{i}{q} \quad \text{if } \sigma(q, r) = 0
    \end{split}
\end{equation}

Let $\atraindataind{q} \in \trainset^+$, we have $2^\depth$ constraints for classifying examples correctly:
\begin{equation}
    \label{con:improved_classify_pos}
    \begin{split}
        \neg \signfeatexample{1}{q} \land \neg \signfeatexample{2}{q} \land &\dots \land \neg \signfeatexample{\depth-1}{q} \land \neg \signfeatexample{\depth}{q} \rightarrow \adecisionind{1} \\
        \neg \signfeatexample{1}{q} \land \neg \signfeatexample{2}{q} \land &\dots \land \neg \signfeatexample{\depth-1}{q} \land \signfeatexample{\depth}{q} \rightarrow \adecisionind{2} \\
        &\dots \\
         \signfeatexample{1}{q} \land \signfeatexample{2}{q} \land &\dots \land \signfeatexample{\depth-1}{q} \land \signfeatexample{\depth}{q} \rightarrow \adecisionind{2^\depth} 
    \end{split}
\end{equation}

That is, any positive example follows an assignment of the feature ordering that leads to a positive value in the truth table.

Similarly, for any $\atraindataind{q} \in \trainset^-$, we also have $2^\depth$ constraints: 
\begin{equation}
    \label{con:improved_classify_neg}
    \begin{split}
        \neg \signfeatexample{1}{q} \land \neg \signfeatexample{2}{q} \land &\dots \land \neg \signfeatexample{\depth-1}{q} \land \neg \signfeatexample{\depth}{q} \rightarrow \neg \adecisionind{1} \\
        \neg \signfeatexample{1}{q} \land \neg \signfeatexample{2}{q} \land &\dots \land \neg \signfeatexample{\depth-1}{q} \land \signfeatexample{\depth}{q} \rightarrow \neg \adecisionind{2} \\
        &\dots \\
         \signfeatexample{1}{q} \land \signfeatexample{2}{q} \land &\dots \land  \signfeatexample{\depth-1}{q} \land \signfeatexample{\depth}{q} \rightarrow \neg \adecisionind{2^\depth} 
    \end{split}
\end{equation}


We refer to this new \Sat\ encoding for $P_{bdd}(\trainset, \depth)$ as \Bdd2. The encoding size of \Bdd2 is given in Proposition~\ref{prop:model_size_bdd_improved}.

\begin{proposition}
\label{prop:model_size_bdd_improved}
For a $P_{dd}(\trainset, \depth)$ problem with $\nbfeat$ binary features and $\nbtraindata{}$ examples, the encoding size of the SAT encoding (\Bdd2) is $O(\nbtraindata{} \times \depth \times (2^\depth +\nbfeat))$. 
\end{proposition}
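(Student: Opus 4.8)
The plan is to follow exactly the template of the proof of Proposition~\ref{prop:model_size_bdd_orig}: partition the clauses of \Bdd2 according to which constraint family generates them, bound the number of literals contributed by each family, and then take the dominant term of the sum. The novelty of the statement is not the counting technique but the observation that, relative to \Bdd1, the introduction of the intermediate variables $\signfeatexample{i}{q}$ has split the single term $O(\nbtraindata{}\times\depth\times\nbfeat\times 2^\depth)$ of \Bdd1 into the sum $O(\nbtraindata{}\times\depth\times\nbfeat)+O(\nbtraindata{}\times\depth\times 2^\depth)$.

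First I would dispatch the parts that \Bdd2 shares in spirit with \Bdd1. Constraints~(\ref{con:feature_used_atmost_once}) and~(\ref{con:exact_one_feature}) are cardinality constraints over the $\depth\times\nbfeat$ variables $\featplacevar{r}{i}$; encoded with the sequential counter of~\cite{05-sequential} they produce a linear-in-the-number-of-inputs set of bounded-width clauses, hence $O(\depth\times\nbfeat)$ literals. Constraint~(\ref{con:root_is_bead}) is a single disjunction over $2^{\depth-1}$ XOR terms, each Tseitin-encoded with a constant number of literals, so $O(2^\depth)$ literals. Neither of these terms will survive in the final bound once the per-example families are added.

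Next I would count the two new families. The linking constraints~(\ref{con:sign_features}) give, for each of the $\nbtraindata{}$ examples, each position $i\in[1,\depth]$, and each feature $\afeatind{r}$ ($r\in[1,\nbfeat]$), one binary clause; that is $O(\nbtraindata{}\times\depth\times\nbfeat)$ literals in total. The classification constraints~(\ref{con:improved_classify_pos}) and~(\ref{con:improved_classify_neg}) contribute, for each example (which is either positive or negative, not both), exactly $2^\depth$ implications; written as clauses each has $\depth$ literals over the $\signfeatexample{\cdot}{q}$ variables plus one $\adecisionind{\cdot}$ literal, i.e. $\depth+1$ literals, so the total is $O(\nbtraindata{}\times 2^\depth\times\depth)$ literals. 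Summing all four contributions gives $O(\depth\nbfeat)+O(2^\depth)+O(\nbtraindata{}\depth\nbfeat)+O(\nbtraindata{}\depth 2^\depth)=O(\nbtraindata{}\times\depth\times(2^\depth+\nbfeat))$, since for $\nbtraindata{}\geq 1$ the first two terms are absorbed into the last two.

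I do not expect a genuine obstacle here; the one point that deserves a line of care is the size of the sequential-counter encoding, where one must use that it is linear (not quadratic) in the number of input variables — but this is standard and was already implicitly relied upon for \Bdd1, so I would simply cite~\cite{05-sequential}. The proof is thus a short bookkeeping argument whose only real content is isolating the two dominant families~(\ref{con:sign_features}) and~(\ref{con:improved_classify_pos})--(\ref{con:improved_classify_neg}).
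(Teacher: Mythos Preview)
Your proposal is correct and follows essentially the same approach as the paper: identify constraints~(\ref{con:sign_features}) as contributing $O(\nbtraindata{}\times\depth\times\nbfeat)$ literals and constraints~(\ref{con:improved_classify_pos})--(\ref{con:improved_classify_neg}) as contributing $O(\nbtraindata{}\times\depth\times 2^\depth)$ literals (via $2^\depth$ clauses of $\depth+1$ literals per example), then sum. The paper's proof is in fact terser than yours---it omits the explicit accounting for constraints~(\ref{con:feature_used_atmost_once}),~(\ref{con:exact_one_feature}), and~(\ref{con:root_is_bead}) that you include---but the core argument is identical.
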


\begin{proof}
The term $\nbtraindata{} \times \depth \times \nbfeat$ results from constraint (\ref{con:sign_features}).
For constraints~(\ref{con:improved_classify_pos}) 
and~(\ref{con:improved_classify_neg}), for each example, there are $2^\depth$ clauses containing $\depth + 1$ literals. 
The term $\nbtraindata{} \times \depth \times 2^\depth$ results from that.
\end{proof}

Propositions \ref{prop:model_size_bdd_orig} and \ref{prop:model_size_bdd_improved} show a clear theoretical advantage of \Bdd{2} compared to \Bdd{1} in terms of the encoding size, thus scalability. 

\subsection{MaxSAT Model for $P_{{bdd}}^*(\trainset, \depth):$}
We now present a \MaxSat\ encoding for the optimization problem $P_{bdd}^*(\trainset, \depth)$.
That is, given a set of examples $\trainset$, find a binary decision diagram of depth $\depth$ that maximises the number of examples correctly classified.


We transform the \Sat\ encoding of \Bdd{}s into a \MaxSat\ encoding following a simple technique. 
The idea is to keep structural constraints as hard clauses and classification constraints as soft clauses. 
We consider 
\Bdd2 as it has a reduced size.
Constraints (\ref{con:feature_used_atmost_once}),  (\ref{con:exact_one_feature}), (\ref{con:root_is_bead}) and~(\ref{con:sign_features}) are kept as hard clauses.
To classify the examples, we declare all clauses of constraints~(\ref{con:improved_classify_pos}) 
and~(\ref{con:improved_classify_neg}) as soft clauses.
For any example $\atraindataind{q}$, the number of satisfied soft clauses associated to $\atraindataind{q}$ is either $2^\depth$ (indicating $\atraindataind{q}$ is classified correctly), or $2^\depth-1$ (indicating $\atraindataind{q}$ is classified wrongly). 
Therefore, the objective of maximising the number of satisfied soft clauses is equivalent to maximise the number of examples correctly classified.

\subsection{Merging Compatible Subtrees}


Consider a \Bdd\ $\BddGraph{}$ found by a \MaxSat\ solver and its associated truth table $\Truthtable{}$.
Based on the feature ordering of $\BddGraph{}$, it is possible that some values in $\Truthtable{}$ capture no (training) example 
(Equivalent to ``\textit{unknown}'' nodes for \OODG{}).
Such values are decided by the \MaxSat\ solver in an arbitrary way, which gives a certain bias in generalisation.
We propose to merge compatible subtrees in $\BddGraph{}$ in order to handle this bias. 
This will result in changing some values in the truth table $\Truthtable{}$ (i.e. the arbitrary ones decided by \MaxSat).

We propose a post-processing procedure to merge compatible subtrees using the following three phase: 
(1) update the truth table $\Truthtable{}$ by replacing the values of $\Truthtable{}$ that capture no examples
with a special value ``u'';
(2) for each level, check the \Beads{}, where ``u'' can be used to match $1$ or $0$, and create a node for each \Bead{};
(3) for each level, after creating the nodes, check the matches between all subtables of the next level.  
For matched subtables, update the corresponding \Beads{} of current level to eliminate the ``u'' values.
This process is illustrated in Example~\ref{exp:compatible_merge}.

\begin{example}
\label{exp:compatible_merge}
Assume that a \MaxSat\ model finds the truth table $\Truthtable{}$ $00010111$ with the feature ordering $[f_1, f_2, f_3]$,
and assume that the updated truth table $\Truthtable{'}$ is $u0u1011u$ after phase (1).
For level $1$, we create a root node for $\Truthtable{'}$ as it is a bead.
Then, we check the subtables of $\Truthtable{'}$ ($u0u1$ and $011u$). Since they do not match, we move to the next level.
For level $2$, we create a node for $u0u1$ and a node for $011u$ as they are beads.
Then, we check all subtables of the next level, which are $u0$, $u1$, $01$ and $1u$.
We observe that $u0$ matches $1u$, and $u1$ matches $01$, therefore, the beads $u0u1$ and $011u$ are updated as $1001$ and $0110$.
Therefore, the updated beads of $\Truthtable{'}$ are $\{u0u1011u, 1001(u0u1), 0110(011u), 10, 01, 0, 1\}$. 

Figure \ref{fig:Ex3_compatible_bdd} shows the beads updated (in the left), the \Bdd{} before the merging process (in the right) and the final \Bdd{} found (in the middle).
\end{example}
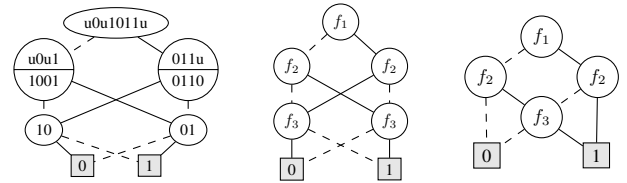
\begin{figure}[htb]
    \centering
    \begin{minipage}{0.365\linewidth}
        \centering
                \begin{tikzpicture}[scale=0.6, every node/.style={transform shape}]
        \node[beadnodeone]    (node1)     {u0u1011u};
        \node[beadnodethree]    (node2)     [below left = 3mm and 3mm of node1]     {u0u1 \nodepart{lower} 1001};
        \node[beadnodethree]    (node3)     [below right=3mm and 3mm of node1]     {011u \nodepart{lower} 0110};
        \node[beadnodetwo]    (node4)     [below = 3mm of node2]    {10};
        \node[beadnodetwo]    (node5)     [below = 3mm of node3]    {01};
        \node[squarednode]  (node6)     [below right = 3mm and 3mm of node4]     {0};
        \node[squarednode]  (node7)     [below left = 3mm and 3mm of node5]     {1};
            \draw           (node1) -- (node3);
            \draw [dashed]  (node1) -- (node2);
            \draw           (node2) -- (node5);
            \draw [dashed]  (node2) -- (node4);
            \draw           (node3) -- (node4);
            \draw [dashed]  (node3) -- (node5);
            \draw           (node4) -- (node6);
            \draw [dashed]  (node4) -- (node7);
            \draw           (node5) -- (node7);
            \draw [dashed]  (node5) -- (node6);
        \end{tikzpicture}
    \end{minipage} \hfill
    \begin{minipage}{0.32\linewidth}
        \centering
        \begin{tikzpicture}[scale=0.61, every node/.style={transform shape}]
            \node[roundnode]    (node1)     {$f_1$};
            \node[roundnode]    (node2)     [below left = 4mm and 5mm of node1]     {$f_2$};
            \node[roundnode]    (node3)     [below right= 4mm and 5mm of node1]     {$f_2$};
            \node[roundnode]    (node4)     [below = 4mm of node2]     {$f_3$};
            \node[roundnode]    (node5)     [below = 4mm of node3]     {$f_3$};
            \node[squarednode]  (node6)     [below = 4mm of node4]     {0};
            \node[squarednode]  (node7)     [below = 4mm of node5]     {1};
            \draw           (node1) -- (node3);
            \draw [dashed]  (node1) -- (node2);
            \draw           (node2) -- (node5);
            \draw [dashed]  (node2) -- (node4);
            \draw           (node3) -- (node4);
            \draw [dashed]  (node3) -- (node5);
            \draw           (node4) -- (node6);
            \draw [dashed]  (node4) -- (node7);
            \draw           (node5) -- (node7);
            \draw [dashed]  (node5) -- (node6);
        \end{tikzpicture}
    \end{minipage}
    \hfill
    \begin{minipage}{0.29\linewidth}
        \centering
        \begin{tikzpicture}[scale=0.7, every node/.style={transform shape}]
            \node[roundnode]    (node1)     {$\afeatind{1}$};
            \node[roundnode]    (node2)     [below left = 2mm and 5mm of node1]     {$\afeatind{2}$};
            \node[roundnode]    (node3)     [below right= 2mm and 5mm of node1]     {$\afeatind{2}$};
            \node[roundnode]    (node4)     [below left = 2mm and 5mm of node3]     {$\afeatind{3}$};
            \node[squarednode]  (node5)     [below left = 2mm and 5mm of node4]     {0};
            \node[squarednode]  (node6)     [below right= 2mm and 5mm of node4]     {1};
            \draw           (node1) -- (node3);
            \draw [dashed]  (node1) -- (node2);
            \draw           (node2) -- (node4);
            \draw [dashed]  (node2) -- (node5);
            \draw           (node3) -- (node6);
            \draw [dashed]  (node3) -- (node4);
            \draw           (node4) -- (node6);
            \draw [dashed]  (node4) -- (node5);
        \end{tikzpicture}
    \end{minipage}
    \caption{\footnotesize{The \Bdd{} after merging compatible subtrees (the middle one) and the \Bdd{} before merging (the right one).}}
    \label{fig:Ex3_compatible_bdd}
\end{figure}

\section{Experimental Results}
\label{sec:exp}

We present our large experimental studies to evaluate empirically our propositions on different levels\footnote{The source code and datasets are available online at https://gitlab.laas.fr/hhu/bddencoding}.
At first, we make some preliminary experiments 
on the proposed \Sat{} 
models to confirm the great improvements in the encoding size of \Bdd2 compared to \Bdd1, as shown theoretically in proposition \ref{prop:model_size_bdd_orig} and \ref{prop:model_size_bdd_improved}.
Then, we evaluate the prediction performance between the proposed \MaxSat-\Bdd{} model and the heuristic methods, \ODT{} and \OODG{}~\cite{DBLP:conf/ijcai/KohaviL95}.
Next, we compare our \MaxSat-\Bdd{} model 
with an exact method for building decision trees using \MaxSat{}~\cite{DBLP:conf/ijcai/Hu0HH20} in {terms} of prediction quality, model size, and encoding size.
Finally, we propose and evaluate
a simple, yet efficient, scalable heuristic version of our \MaxSat-\Bdd{} model.

We consider
datasets from CP4IM\footnote{https://dtai.cs.kuleuven.be/CP4IM/datasets/}. 
These datasets are binarized with the one-hot encoding.
Table \ref{tab:dataset_exp_cp4im} describes 
the characteristics of these datasets:
$\nbtraindata$ indicates the number of examples,
$\nbfeat_{orig}$ indicates the original number of features, $\nbfeat$ indicates the number of binary features after binarization, 
and $pos$ indicates the percentage of positive examples.

All experiments were run on a cluster using Xeon E5-2695 v3@2.30GHz CPU running xUbuntu 16.04.6 LTS.
The \Sat{} solver we used is Kissat~\cite{BiereFazekasFleuryHeisinger-SAT-Competition-2020-solvers}, the winner of \Sat{} competition 2020.
For each experiment of \Sat{} encoding, we set $20$ hours as the global timeout for \Sat{} solver.
The \MaxSat\ solver we used is Loandra~\cite{DBLP:conf/cpaior/BergDS19},
an efficient incomplete \MaxSat\ solver. 
For each experiment of \MaxSat-\Bdd{}, the time limit for generating formulas and the time limit for solver are set to $15$ minutes.

\begin{table}[htb]
    \centering
    \renewcommand{\arraystretch}{0.85}
    \setlength{\tabcolsep}{10pt}
    \small{}
        \begin{tabular}{||c||c|c|c||c||}
            \hline
            \hline
            \textbf{Dataset} & $\nbtraindata$ & $\nbfeat{}_{orig}$ & $\nbfeat{}$ & $pos$\\ \hline
            \hline
            anneal & 812 & 42 & 89 & 0.77\\\hline
            audiology & 216 & 67 & 146 & 0.26 \\\hline
            australian & 653 & 51 & 124 & 0.55 \\\hline
            cancer & 683 & 9 & 89 & 0.35\\\hline
            car & 1728 & 6 & 21 & 0.30 \\\hline
            cleveland & 296 & 45 & 95 & 0.54\\\hline
            hypothyroid & 3247 & 43 & 86 & 0.91\\\hline
            kr-vs-kp & 3196 & 36 & 73 & 0.52\\\hline
            lymph & 148 & 27 & 68 & 0.55\\\hline
            mushroom & 8124 & 21 & 112 & 0.52\\\hline
            tumor & 336 & 15 & 31 & 0.24\\\hline
            soybean & 630 & 16 & 50 & 0.15 \\\hline
            splice-1 & 3190 & 60 & 287 & 0.52\\\hline
            tic-tac-toe & 958 & 9 & 27 & 0.65\\\hline
            vote & 435 & 16 & 48 & 0.61\\\hline
            \hline
        \end{tabular}
        \caption{\footnotesize{Detailed Information Regarding the Datasets}}
        \label{tab:dataset_exp_cp4im}
\end{table}

\subsection{Comparison Of The SAT Encodings}

We consider the optimisation problem of finding a \Bdd{} that classifies all training examples correctly with the minimum depth.
We use a simple linear search by solving multiple times the decision problem asking to find a \Bdd{} with a given depth $\depth$ (Problem $P_{bdd}(\trainset, \depth)$ in Section \ref{sec:Sat-models}).
We set the initial depth $\depth_0=7$.
Considering the scalability problem, for each dataset, we use the hold-out method to split the training and testing set.
We choose $5$ different small splitting ratios $r=\{0.05, 0.1, 0.15, 0.2, 0.25\}$ to generate the training set. The remaining instances are used for testing.
This process is repeated $10$ times with different random seeds.

Table \ref{tab:annex1_satbdd_diff} reports the average results of instances that are solved to optimality by all methods within the given time. 
The columns ``Acc'' and ``dopt'' indicate the average testing accuracy in percent and the average optimal depth, respectively. 
The encoding size in given in column ``E\_Size'' (i.e., the number of literals in the cnf file divided by $10^3$). The column ``Time'' indicates the runtime in seconds of successful runs.
The value ``N/A'' indicates the lack of results because of the timeout.
The best values are indicated in blue.

Table \ref{tab:annex1_satbdd_diff} shows the great improvements in terms of the encoding size and the runtime of \Bdd{2} compared to \Bdd{1}. 
This empirical observation is coherent with the complexity analysis made in Proposition \ref{prop:model_size_bdd_orig} and \ref{prop:model_size_bdd_improved}.
We also observe no substantial difference in terms of testing accuracy between the two approaches. 

\begin{table}[htb!]
    \centering
    \renewcommand{\arraystretch}{0.8}
    \setlength{\tabcolsep}{1.3pt}
    \scriptsize{}
    \begin{tabular}{|c|c||c|c|c|c||c|c|c|c||}
    \hline
    \multirow{2}{*}{\scriptsize{\textbf{Datasets}}} & 
    \multirow{2}{*}{\scriptsize{\textbf{ratio}}} &
    \multicolumn{4}{c||}{\scriptsize{\textbf{BDD1}}} & 
    \multicolumn{4}{c||}{\scriptsize{\textbf{BDD2}}} \\
    \cline{3-10} 
            & &
            \textbf{Acc} & \textbf{dopt} & \textbf{E\_Size} & \textbf{Time} &
            \textbf{Acc} & \textbf{dopt} & \textbf{E\_Size} & \textbf{Time}\\ 
    \hline
    \multirow{1}{*}{anneal}
 & 0.05 & N/A  & N/A  & N/A & N/A  & \cellcolor{blue!50}68.65 & 6.33 & \cellcolor{blue!50}9.43 & \cellcolor{blue!50}192.74\\
\hline

\multirow{5}{*}{audiology}
 & 0.05 & 75.99 & 2 & \cellcolor{blue!50}0.83 & 0.46 & \cellcolor{blue!50}76.86 & 2 & \cellcolor{blue!50}0.83 & \cellcolor{blue!50}0.07\\
 & 0.10 & \cellcolor{blue!50}91.22 & 2.50 & 2.86 & 0.84 & 90.97 & 2.50 & \cellcolor{blue!50}2.01 & \cellcolor{blue!50}0.07\\
 & 0.15 & 92.54 & 2.80 & 5.37 & 1.39 & \cellcolor{blue!50}93.3 & 2.80 & \cellcolor{blue!50}3.26 & \cellcolor{blue!50}0.09\\
 & 0.20 & \cellcolor{blue!50}90.46 & 3.10 & 11.54 & 4.65 & 90 & 3.10 & \cellcolor{blue!50}4.7 & \cellcolor{blue!50}0.17\\
 & 0.25 & \cellcolor{blue!50}92.94 & 3.60 & 23.53 & 31.79 & 92.45 & 3.60 & \cellcolor{blue!50}6.81 & \cellcolor{blue!50}0.39\\
\hline

\multirow{2}{*}{australian}
 & 0.05 & \cellcolor{blue!50}80.21 & 3.30 & 8.54 & 21.65 & 79.02 & 3.30 & \cellcolor{blue!50}3.36 & \cellcolor{blue!50}0.54\\
 & 0.10 & N/A  & N/A  & N/A & N/A  & \cellcolor{blue!50}77.46 & 6.22 & \cellcolor{blue!50}15.21 & \cellcolor{blue!50}7473.32\\
\hline

\multirow{5}{*}{cancer}
 & 0.05 & \cellcolor{blue!50}86.97 & 2.70 & 3.55 & 0.49 & 86.69 & 2.70 & \cellcolor{blue!50}2.04 & \cellcolor{blue!50}0.08\\
 & 0.10 & \cellcolor{blue!50}89.97 & 4 & 23.15 & 4.95 & 89.24 & 4 & \cellcolor{blue!50}6.01 & \cellcolor{blue!50}0.55\\
 & 0.15 & \cellcolor{blue!50}90.7 & 5.20 & 104.81 & 156.32 & 90.29 & 5.20 & \cellcolor{blue!50}12.93 & \cellcolor{blue!50}3.72\\
 & 0.20 & 91.53 & 6.40 & 390.71 & 10224.45 & \cellcolor{blue!50}91.57 & 6.40 & \cellcolor{blue!50}26.5 & \cellcolor{blue!50}55.2\\
 & 0.25 & N/A  & N/A  & N/A & N/A  & \cellcolor{blue!50}92.16 & 6.44 & \cellcolor{blue!50}35.12 & \cellcolor{blue!50}200.12\\
\hline

\multirow{1}{*}{car}
 & 0.05 & 76.81 & 7.62 & 162.38 & 5538.01 & \cellcolor{blue!50}80.18 & 7.62 & \cellcolor{blue!50}19.42 & \cellcolor{blue!50}1924.9\\
\hline

\multirow{5}{*}{cleveland}
 & 0.05 & \cellcolor{blue!50}68.19 & 2.50 & 1.41 & 0.86 & 64.72 & 2.50 & \cellcolor{blue!50}1.03 & \cellcolor{blue!50}0.07\\
 & 0.10 & 68.58 & 3.80 & 9.15 & 121.11 & \cellcolor{blue!50}69.29 & 3.80 & \cellcolor{blue!50}2.94 & \cellcolor{blue!50}0.92\\
 & 0.15 & \cellcolor{blue!50}72.53 & 4.80 & 27.65 & 800.35 & 70.83 & 4.80 & \cellcolor{blue!50}5.63 & \cellcolor{blue!50}15.07\\
 & 0.20 & N/A  & N/A  & N/A & N/A  & \cellcolor{blue!50}68.78 & 6.10 & \cellcolor{blue!50}10.9 & \cellcolor{blue!50}2616.23\\
 & 0.25 & N/A  & N/A  & N/A & N/A  & \cellcolor{blue!50}68.74 & 6.90 & \cellcolor{blue!50}18.13 & \cellcolor{blue!50}16405.46\\
\hline

\multirow{1}{*}{hypothyroid}
 & 0.05 & 96.26 & 5 & 131.80 & 319.09 & \cellcolor{blue!50}96.3 & 5 & \cellcolor{blue!50}18.22 & \cellcolor{blue!50}2.98\\
\hline

\multirow{5}{*}{lymph}
 & 0.05 & 67.23 & 2 & \cellcolor{blue!50}0.33 & 0.13 & \cellcolor{blue!50}68.79 & 2 & 0.34 & \cellcolor{blue!50}0.07\\
 & 0.10 & 67.69 & 2.60 & 1.11 & 0.46 & \cellcolor{blue!50}70.37 & 2.60 & \cellcolor{blue!50}0.79 & \cellcolor{blue!50}0.07\\
 & 0.15 & 70.16 & 3.40 & 3.41 & 2.83 & \cellcolor{blue!50}71.98 & 3.40 & \cellcolor{blue!50}1.52 & \cellcolor{blue!50}0.16\\
 & 0.20 & 70.34 & 3.90 & 6.98 & 21.77 & \cellcolor{blue!50}72.94 & 3.90 & \cellcolor{blue!50}2.24 & \cellcolor{blue!50}0.62\\
 & 0.25 & \cellcolor{blue!50}72.23 & 5.10 & 23.08 & 410.10 & 68.48 & 5.10 & \cellcolor{blue!50}3.91 & \cellcolor{blue!50}4.31\\
\hline

\multirow{5}{*}{mushroom}
 & 0.05 & \cellcolor{blue!50}99.55 & 5.30 & 518.03 & 3774.19 & 99.48 & 5.30 & \cellcolor{blue!50}60.03 & \cellcolor{blue!50}28.32\\
 & 0.10 & 99.81 & 5.80 & 1523.69 & 12552.47 & \cellcolor{blue!50}99.87 & 5.80 & \cellcolor{blue!50}138.73 & \cellcolor{blue!50}116.07\\
 & 0.15 & \cellcolor{blue!50}99.87 & 5.90 & 2475.01 & 20616.09 & 99.86 & 5.90 & \cellcolor{blue!50}213.62 & \cellcolor{blue!50}210.98\\
 & 0.20 & \cellcolor{blue!50}99.97 & 6 & 3503.87 & 29342.71 & 99.92 & 6 & \cellcolor{blue!50}292.25 & \cellcolor{blue!50}278.23\\
 & 0.25 & \cellcolor{blue!50}99.96 & 6 & 4381.69 & 33787.52 & 99.95 & 6 & \cellcolor{blue!50}365.18 & \cellcolor{blue!50}437.4\\
\hline


\multirow{2}{*}{soybean}
 & 0.05 & \cellcolor{blue!50}80.52 & 3.90 & 5.74 & 2.94 & 78.58 & 3.90 & \cellcolor{blue!50}1.78 & \cellcolor{blue!50}0.21\\
 & 0.10 & \cellcolor{blue!50}84.47 & 5.90 & 70.84 & 1391.79 & 82.22 & 5.90 & \cellcolor{blue!50}7.48 & \cellcolor{blue!50}7.46\\
\hline

\multirow{2}{*}{tic-tac-toe}
 & 0.05 & 64.46 & 5.90 & 23.57 & 41.18 & \cellcolor{blue!50}66.37 & 5.90 & \cellcolor{blue!50}3.83 & \cellcolor{blue!50}5.62\\
 & 0.10 & 72.68 & 7.60 & 206.10 & 18589.30 & \cellcolor{blue!50}74.61 & 7.60 & \cellcolor{blue!50}21.52 & \cellcolor{blue!50}2846.24\\
\hline

\multirow{5}{*}{vote}
 & 0.05 & 90.89 & 2.10 & 0.61 & 0.27 & \cellcolor{blue!50}91.09 & 2.10 & \cellcolor{blue!50}0.57 & \cellcolor{blue!50}0.08\\
 & 0.10 & \cellcolor{blue!50}91.93 & 2.60 & 1.99 & 0.57 & 91.63 & 2.60 & \cellcolor{blue!50}1.34 & \cellcolor{blue!50}0.08\\
 & 0.15 & \cellcolor{blue!50}92.27 & 3.40 & 6.97 & 1.20 & \cellcolor{blue!50}92.27 & 3.40 & \cellcolor{blue!50}2.75 & \cellcolor{blue!50}0.16\\
 & 0.20 & 92.35 & 4.10 & 20.68 & 26.43 & \cellcolor{blue!50}92.44 & 4.10 & \cellcolor{blue!50}4.76 & \cellcolor{blue!50}0.71\\
 & 0.25 & \cellcolor{blue!50}93 & 4.90 & 53.92 & 348.56 & 92.42 & 4.90 & \cellcolor{blue!50}8.13 & \cellcolor{blue!50}3.4\\
\hline
    \end{tabular}
    \caption{\label{tab:annex1_satbdd_diff}
    \footnotesize{Evaluation of the Reduced SAT Model}
    }
\end{table}


\begin{figure*}[htb!]
        \centering
        \includegraphics[width=.8\linewidth, height=.28\linewidth]{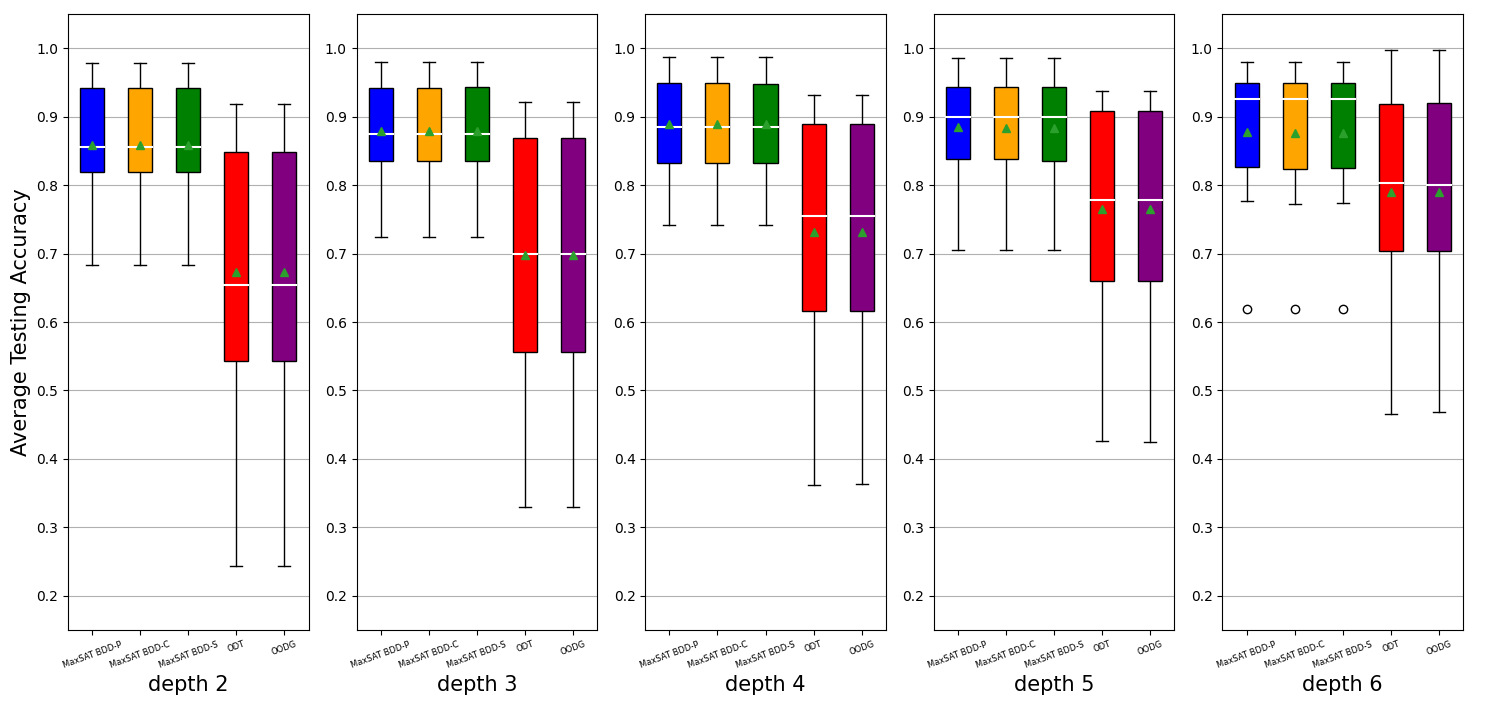}
        \captionof{figure}{
        \footnotesize{
        The average testing accuracy with different biases: \textbf{MaxSAT BDD-P}, \textbf{MaxSAT BDD-C}, \textbf{MaxSAT BDD-S}, \ODT{}, \OODG{} (respectively from left to right).
        }}
        \label{fig:exp2_oodg_maxsatbdd_all}
\end{figure*}

\subsection{Comparison with Existing Heuristic Approaches}

We consider the proposed \MaxSat-\Bdd{} model for solving the $P_{{bdd}}^*(\trainset, \depth)$ problem (defined in Section~\ref{sec:problemdefinition}) with $5$ different depths $\depth \in \{2, 3, 4, 5, 6\}$.
For each dataset, we use random $5$-fold cross-validation with $5$ different seeds.
We compare our \MaxSat-\Bdd{} model with the heuristic approaches proposed in \cite{DBLP:conf/ijcai/KohaviL95} to learn \ODT{} and \OODG{}.
For the heuristic methods, as described in the background section, after merging the \textit{isomorphic} and \textit{compatible} subtrees of \ODT{}, 
the 
\OODG{} 
changes the bias for those ``\textit{unknown}'' nodes.
In fact, different bias affects the prediction for unseen examples, but \textit{not} for the training examples.
Therefore, the training accuracies of \ODT{}  and \OODG{} are equal 
whereas the testing accuracies could be different.
This fact is also true in the \MaxSat-\Bdd{} model.
In this experiment, we consider the following three biases:

 \begin{itemize}
     
\item By assigning for each unknown node the majority class of its branch
(denoted as \textbf{MaxSAT BDD-P})
\item By merging compatible subtrees (\textbf{MaxSAT BDD-C})
\item By using the class decided by the \MaxSat\ solver (\textbf{MaxSAT BDD-S}).
 \end{itemize}
 
\begin{figure}[h!]
        \centering
        \includegraphics[scale=.18]  
        {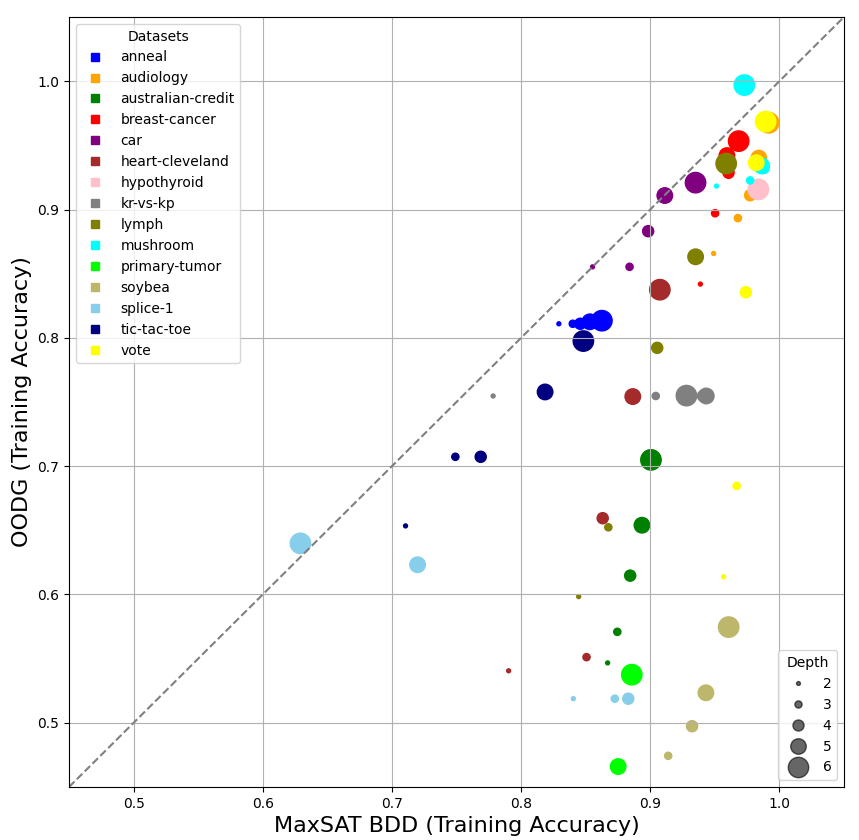}
        \captionof{figure}{\footnotesize{Comparison between the average training accuracy of \OODG{} and the  \MaxSat-\Bdd{}}}
        \label{fig:OODGvsMaxSAT}
\end{figure}

Figure~\ref{fig:OODGvsMaxSAT} presents the comparison of the average training accuracy between \OODG{} and \MaxSat-\Bdd{} model.
In this figure, 
different datasets are marked with different colors, and different depths are labelled with points of different sizes.
From the scatter {plot}, we observe that the average training accuracy of both approaches increase with the increase of depth. 
{Moreover, and more importantly,} the \MaxSat-\Bdd{} model performs better than the {heuristic} \OODG{} in training accuracy. 

Figure~\ref{fig:exp2_oodg_maxsatbdd_all} shows the average testing accuracy of \MaxSat-\Bdd{} with different biases, \ODT{}, and \OODG{}  using different depths averaged over all datasets.
The white line and green triangle of each box indicate the median and the average value, {respectively}.
Clearly,
the \MaxSat-\Bdd{} models have better prediction performance than \ODT{}  and \OODG{}. This is particularly true with small depths.
Increasing the depth increases the predictions for all methods as expected.
However, the increase is less important with the different \MaxSat-\Bdd{} models. 
We observe also that there is little difference between the different biases for \MaxSat-\Bdd{}. This suggests that the optimal solutions are somewhat robust to the bias.
We noticed also that for all datasets (except one), all \MaxSat-\Bdd{} models report optimality when the depth is equal to $2$. 

\subsection{Comparison with an Exact Decision Tree Approach}
The purpose of this experiment is to 
{compare} our proposition
with the exact method for learning decision trees using the same solving approach (\MaxSat).
For \MaxSat-\Bdd{},
we consider only the bias of merging compatible subtrees \textbf{MaxSAT BDD-C} since no substantial difference was observed between the different biases.
We consider different values for the depth: $\depth \in \{2, 3, 4, 5, 6\}$.
For each dataset, we use random $5$-fold cross-validation with $5$ different seeds.
For \MaxSat-\Bdd{}, the depth also corresponds to the number of selected features,
whereas for \MaxSat-\DT{}
the depth indicates the \textit{maximum depth} of the \Bdd. 
Table \ref{tab:exp3_maxsatbdd_maxsatdt_all} 
presents the results of the evaluation.
The column ``Size'' and ``E\_Size'' indicate the number of nodes of the model and the encoding size (number of literals divided by $10^3$), respectively.
The column ``F\_d'' indicates the average number of features used in the decision tree.
The best values are indicated in blue.

The results in Table \ref{tab:exp3_maxsatbdd_maxsatdt_all}
show that the \MaxSat-\Bdd{} approach is competitive to \MaxSat-\DT{} in terms of prediction quality.
In most cases, the training and testing accuracy of these two approaches are close.
However, the size of the models are always smaller with \MaxSat-\Bdd{}. 
The difference grows bigger when the depth increases.
The reduction in model size provides better intrepretability.
Moreover, sometimes, compared to the optimal \Bdd{}s found via \MaxSat-\Bdd{}, the optimal decision trees found via \MaxSat-\DT{} uses useless splits.
This is, for instance, the case for the datasets ``\textit{car}'' and ``\textit{hypothyroid}'' with depth $2$.
We observe also that \MaxSat-\Bdd{} has always a {much} lighter encoding size than \MaxSat-\DT{}. This gives a clear advantage to \MaxSat-\Bdd{} to handle the problem and to report optimality. 

\begin{table}[!h]
    \centering
    \renewcommand{\arraystretch}{0.7}
    \setlength{\tabcolsep}{1.6pt}
    \scriptsize{}
    \begin{tabular}{|c|c||c|c|c|c||c|c|c|c|c||}
    \hline
    \multirow{2}{*}{\scriptsize{\textbf{Datasets}}} & 
    \multirow{2}{*}{\scriptsize{\textbf{\depth{}}}} &
    \multicolumn{4}{c||}{\scriptsize{\textbf{MaxSAT BDD-C}}} & 
    \multicolumn{5}{c||}{\scriptsize{\textbf{MaxSAT-DT}}} \\
    \cline{3-11} 
            & &
            \textbf{Train} &
            \textbf{Test} &
            \textbf{Size} &
            \textbf{E\_Size} 
            &\textbf{Train} &
            \textbf{Test} & 
            \textbf{Size} &
            \textbf{E\_Size}&
            \textbf{F\_d}\\ 
    \hline
    \multirow{5}{*}{\shortstack[1]{anneal}}
 & 2 & 82.92 & \cellcolor{blue!50}82.19 & \cellcolor{blue!50}5 & \cellcolor{blue!50}24.09 & \cellcolor{blue!50}83.18 & 82.14 & 6.84 & 52.72 & 2.88\\
 & 3 & 84 & 83.55 & \cellcolor{blue!50}7 & \cellcolor{blue!50}37.21 & \cellcolor{blue!50}85.07 & \cellcolor{blue!50}84.66 & 12.68 & 126.18 & 5.76\\
 & 4 & 84.58 & 83.84 & \cellcolor{blue!50}9.4 & \cellcolor{blue!50}52.06 & \cellcolor{blue!50}86.05 & \cellcolor{blue!50}84.78 & 18.68 & 315.45 & 8.64\\
 & 5 & 85.33 & 83.92 & \cellcolor{blue!50}11.72 & \cellcolor{blue!50}71.08 & \cellcolor{blue!50}86.44 & \cellcolor{blue!50}84.88 & 23.88 & 865.26 & 11.08\\
 & 6 & 86.26 & 83.70 & \cellcolor{blue!50}14.68 & \cellcolor{blue!50}99.47 & \cellcolor{blue!50}87.6 & \cellcolor{blue!50}85.76 & 39.16 & 2666.67 & 17.32\\
\hline

\multirow{5}{*}{\shortstack[1]{audiology}}
& 2 & 94.91 & \cellcolor{blue!50}94.92 & \cellcolor{blue!50}4 & \cellcolor{blue!50}10.59 & \cellcolor{blue!50}95.49 & \cellcolor{blue!50}94.92 & 7 & 31.35 & 3\\
 & 3 & 96.78 & \cellcolor{blue!50}95.84 & \cellcolor{blue!50}5.04 & \cellcolor{blue!50}16.41 & \cellcolor{blue!50}97.82 & 95.56 & 11.56 & 88.75 & 5.28\\
 & 4 & 97.73 & \cellcolor{blue!50}95.56 & \cellcolor{blue!50}6.96 & \cellcolor{blue!50}22.56 & \cellcolor{blue!50}99.51 & 94.54 & 19.08 & 272.15 & 8.68\\
 & 5 & 98.40 & \cellcolor{blue!50}94.44 & \cellcolor{blue!50}9.88 & \cellcolor{blue!50}29.82 & \cellcolor{blue!50}99.95 & 93.98 & 27 & 915.29 & 11.72\\
 & 6 & 99.17 & \cellcolor{blue!50}95.84 & \cellcolor{blue!50}14.28 & \cellcolor{blue!50}39.59 & \cellcolor{blue!50}99.86 & 94.08 & 24.12 & 3323.61 & 10.88\\
\hline

\multirow{5}{*}{\shortstack[1]{australian}}
 & 2 & 86.70 & \cellcolor{blue!50}85.94 & \cellcolor{blue!50}4.72 & \cellcolor{blue!50}26.79 & \cellcolor{blue!50}86.93 & 85.33 & 6.68 & 59.65 & 2.84\\
 & 3 & 87.45 & 84.81 & \cellcolor{blue!50}5.32 & \cellcolor{blue!50}41.15 & \cellcolor{blue!50}88.09 & \cellcolor{blue!50}84.87 & 13.08 & 146.15 & 5.68\\
 & 4 & 88.45 & \cellcolor{blue!50}86.03 & \cellcolor{blue!50}7.4 & \cellcolor{blue!50}56.85 & \cellcolor{blue!50}88.74 & 85.18 & 17.48 & 377.62 & 7.92\\
 & 5 & \cellcolor{blue!50}89.36 & \cellcolor{blue!50}85.91 & \cellcolor{blue!50}10.44 & \cellcolor{blue!50}75.9 & 89.28 & 84.75 & 22.52 & 1076.35 & 10.08\\
 & 6 & \cellcolor{blue!50}90.05 & \cellcolor{blue!50}85.7 & \cellcolor{blue!50}17.32 & \cellcolor{blue!50}102.49 & 89.49 & 84.84 & 27.08 & 3433.64 & 12.20\\
\hline

\multirow{5}{*}{\shortstack[1]{cancer}}
 & 2 & 93.88 & 93.59 & \cellcolor{blue!50}4 & \cellcolor{blue!50}20.29 & \cellcolor{blue!50}94.91 & \cellcolor{blue!50}94.2 & 7 & 45.56 & 3\\
 & 3 & 95.02 & 93.91 & \cellcolor{blue!50}5.84 & \cellcolor{blue!50}31.37 & \cellcolor{blue!50}96.6 & \cellcolor{blue!50}94.73 & 15 & 110.85 & 6.96\\
 & 4 & 96.06 & \cellcolor{blue!50}95.49 & \cellcolor{blue!50}7.96 & \cellcolor{blue!50}43.89 & \cellcolor{blue!50}97.34 & 94.17 & 21 & 283.77 & 9.44\\
 & 5 & 95.94 & 93.74 & \cellcolor{blue!50}10.68 & \cellcolor{blue!50}59.91 & \cellcolor{blue!50}97.99 & \cellcolor{blue!50}94.35 & 29.32 & 800.89 & 13.20\\
 & 6 & 96.84 & \cellcolor{blue!50}94.35 & \cellcolor{blue!50}14.8 & \cellcolor{blue!50}83.83 & \cellcolor{blue!50}98.87 & 93.41 & 45.72 & 2536.91 & 19.88\\
\hline

\multirow{5}{*}{\shortstack[1]{car}}
 & 2 & \cellcolor{blue!50}85.53 & \cellcolor{blue!50}85.53 & \cellcolor{blue!50}4 & \cellcolor{blue!50}13.32 & \cellcolor{blue!50}85.53 & \cellcolor{blue!50}85.53 & 6.84 & 32.01 & 2.92\\
 & 3 & 88.40 & 87.41 & \cellcolor{blue!50}5.08 & \cellcolor{blue!50}21.95 & \cellcolor{blue!50}89.25 & \cellcolor{blue!50}87.45 & 12.68 & 71.83 & 5.64\\
 & 4 & 89.84 & 88.54 & \cellcolor{blue!50}6.84 & \cellcolor{blue!50}34.44 & \cellcolor{blue!50}91.62 & \cellcolor{blue!50}89.68 & 20.36 & 162.46 & 7.68\\
 & 5 & 91.13 & 89.91 & \cellcolor{blue!50}9.6 & \cellcolor{blue!50}55.79 & \cellcolor{blue!50}93.78 & \cellcolor{blue!50}92.77 & 29.56 & 389.68 & 10.24\\
 & 6 & 93.51 & 92.99 & \cellcolor{blue!50}13.36 & \cellcolor{blue!50}97.06 & \cellcolor{blue!50}95.8 & \cellcolor{blue!50}95.06 & 31.96 & 1044.54 & 10.88\\
\hline

\multirow{5}{*}{\shortstack[1]{cleveland}}
  & 2 & 79.04 & 72.57 & \cellcolor{blue!50}4 & \cellcolor{blue!50}9.48 & \cellcolor{blue!50}80.76 & \cellcolor{blue!50}72.84 & 7 & 25.57 & 3\\
 & 3 & 85.07 & \cellcolor{blue!50}83.37 & \cellcolor{blue!50}6 & \cellcolor{blue!50}14.73 & \cellcolor{blue!50}85.68 & 76.55 & 12.84 & 68.93 & 5.72\\
 & 4 & 86.32 & \cellcolor{blue!50}79.46 & \cellcolor{blue!50}7.84 & \cellcolor{blue!50}20.55 & \cellcolor{blue!50}86.77 & 76.75 & 17.80 & 200.76 & 8.04\\
 & 5 & \cellcolor{blue!50}88.65 & \cellcolor{blue!50}78.72 & \cellcolor{blue!50}13.08 & \cellcolor{blue!50}27.89 & 87.26 & 74.45 & 23.96 & 646.75 & 10.84\\
 & 6 & \cellcolor{blue!50}90.74 & \cellcolor{blue!50}77.29 & \cellcolor{blue!50}21.04 & \cellcolor{blue!50}38.66 & 88.58 & 75.81 & 28.84 & 2284.76 & 13.08\\
\hline

\multirow{5}{*}{\shortstack[1]{hypothyroid}}
  & 2 & \cellcolor{blue!50}97.84 & \cellcolor{blue!50}97.84 & \cellcolor{blue!50}4 & \cellcolor{blue!50}92.65 & \cellcolor{blue!50}97.84 & \cellcolor{blue!50}97.84 & 5.96 & 182.20 & 2.48\\
 & 3 & 98.09 & \cellcolor{blue!50}98.04 & \cellcolor{blue!50}5.12 & \cellcolor{blue!50}142.78 & \cellcolor{blue!50}98.14 & 97.82 & 9.72 & 402.98 & 4.32\\
 & 4 & 98.27 & \cellcolor{blue!50}98.13 & \cellcolor{blue!50}6.72 & \cellcolor{blue!50}200.09 & \cellcolor{blue!50}98.38 & 98.01 & 15.40 & 885.51 & 7.12\\
 & 5 & 98.30 & \cellcolor{blue!50}98.05 & \cellcolor{blue!50}9.28 & \cellcolor{blue!50}274.03 & \cellcolor{blue!50}98.45 & 98 & 20.04 & 2016.31 & 8.92\\
 & 6 & 98.37 & \cellcolor{blue!50}97.95 & \cellcolor{blue!50}13.68 & \cellcolor{blue!50}385.4 & \cellcolor{blue!50}98.46 & 97.91 & 33.16 & 4957.57 & 14.04\\
\hline

\multirow{5}{*}{\shortstack[1]{kr-vs-kp}}
 & 2 & 77.83 & 77.01 & \cellcolor{blue!50}4 & \cellcolor{blue!50}77.88 & \cellcolor{blue!50}86.92 & \cellcolor{blue!50}86.92 & 7 & 155.09 & 3\\
 & 3 & 90.43 & 90.43 & \cellcolor{blue!50}5.28 & \cellcolor{blue!50}120.54 & \cellcolor{blue!50}93.81 & \cellcolor{blue!50}93.79 & 12.44 & 342.99 & 5.08\\
 & 4 & 94.09 & 94.09 & \cellcolor{blue!50}7.56 & \cellcolor{blue!50}170.28 & \cellcolor{blue!50}94.32 & \cellcolor{blue!50}94.14 & 17.24 & 753.78 & 7.12\\
 & 5 & 94.34 & 94.18 & \cellcolor{blue!50}9.52 & \cellcolor{blue!50}236.39 & \cellcolor{blue!50}94.85 & \cellcolor{blue!50}94.69 & 25.40 & 1717.14 & 10.20\\
 & 6 & 92.80 & 92.55 & \cellcolor{blue!50}11.52 & \cellcolor{blue!50}339.35 & \cellcolor{blue!50}93.91 & \cellcolor{blue!50}93.69 & 29.32 & 4227.67 & 12.20\\
\hline

\multirow{5}{*}{\shortstack[1]{lymph}}
& 2 & 84.46 & \cellcolor{blue!50}83.23 & \cellcolor{blue!50}4 & \cellcolor{blue!50}3.5 & \cellcolor{blue!50}86.01 & 79.27 & 7 & 12.33 & 3\\
 & 3 & 86.76 & 78.35 & \cellcolor{blue!50}5.92 & \cellcolor{blue!50}5.55 & \cellcolor{blue!50}91.93 & \cellcolor{blue!50}80.54 & 14.68 & 36.65 & 6.64\\
 & 4 & 90.54 & \cellcolor{blue!50}82.4 & \cellcolor{blue!50}8.72 & \cellcolor{blue!50}7.86 & \cellcolor{blue!50}94.56 & 78.46 & 20.20 & 117.94 & 8.88\\
 & 5 & 93.51 & \cellcolor{blue!50}83.6 & \cellcolor{blue!50}13.52 & \cellcolor{blue!50}10.94 & \cellcolor{blue!50}97.09 & 82.46 & 27.08 & 413.09 & 11.88\\
 & 6 & 95.88 & \cellcolor{blue!50}84.82 & \cellcolor{blue!50}17.64 & \cellcolor{blue!50}15.74 & \cellcolor{blue!50}99.59 & 80.92 & 46.60 & 1550.34 & 18.96\\
\hline

\multirow{5}{*}{\shortstack[1]{mushroom}}
& 2 & 95.13 & 95.13 & \cellcolor{blue!50}4 & \cellcolor{blue!50}299.19 & \cellcolor{blue!50}96.9 & \cellcolor{blue!50}96.9 & 7 & 565.27 & 3\\
 & 3 & 97.74 & 97.77 & \cellcolor{blue!50}6.8 & \cellcolor{blue!50}458.1 & \cellcolor{blue!50}99.9 & \cellcolor{blue!50}99.9 & 13.72 & 1227.18 & 6.24\\
 & 4 & 98.78 & 98.74 & \cellcolor{blue!50}9 & \cellcolor{blue!50}635.09 & \cellcolor{blue!50}100 & \cellcolor{blue!50}100 & 19.80 & 2603.94 & 9.08\\
 & 5 & 98.63 & 98.57 & \cellcolor{blue!50}11.32 & \cellcolor{blue!50}853.68 & \cellcolor{blue!50}100 & \cellcolor{blue!50}100 & 23.40 & 5571.14 & 10.64\\
 & 6 & 97.28 & 97.10 & \cellcolor{blue!50}14.6 & \cellcolor{blue!50}1165.88 & \cellcolor{blue!50}100 & \cellcolor{blue!50}100 & 27.56 & 12376.90 & 12\\
\hline

\multirow{5}{*}{\shortstack[1]{tumor}}
& 2 & 82.80 & \cellcolor{blue!50}81.6 & \cellcolor{blue!50}4 & \cellcolor{blue!50}3.72 & \cellcolor{blue!50}82.92 & 81.01 & 6.76 & 10.46 & 2.88\\
 & 3 & 83.84 & 80.43 & \cellcolor{blue!50}5.3 & \cellcolor{blue!50}6.02 & \cellcolor{blue!50}86.16 & \cellcolor{blue!50}82.97 & 13.88 & 27.24 & 6.08\\
 & 4 & 85.52 & 82.49 & \cellcolor{blue!50}8.64 & \cellcolor{blue!50}9.04 & \cellcolor{blue!50}87.89 & \cellcolor{blue!50}82.85 & 20.92 & 76.40 & 9.16\\
 & 5 & 87.51 & \cellcolor{blue!50}85.83 & \cellcolor{blue!50}13.32 & \cellcolor{blue!50}13.79 & \cellcolor{blue!50}90.1 & 79.34 & 47.80 & 239.09 & 16.84\\
 & 6 & 88.57 & 81.12 & \cellcolor{blue!50}19.84 & \cellcolor{blue!50}22.44 & \cellcolor{blue!50}90.34 & \cellcolor{blue!50}81.31 & 37.32 & 838.63 & 15.04\\
\hline

\multirow{5}{*}{\shortstack[1]{soybean}}
  & 2 & 90.48 & 90.48 & \cellcolor{blue!50}4 & \cellcolor{blue!50}10.79 & \cellcolor{blue!50}91.27 & \cellcolor{blue!50}91.27 & 7 & 25.55 & 3\\
 & 3 & 91.39 & 90.41 & \cellcolor{blue!50}6.52 & \cellcolor{blue!50}16.99 & \cellcolor{blue!50}95.45 & \cellcolor{blue!50}94.7 & 15 & 62.30 & 7\\
 & 4 & 93.24 & 93.21 & \cellcolor{blue!50}9.04 & \cellcolor{blue!50}24.54 & \cellcolor{blue!50}97.25 & \cellcolor{blue!50}95.9 & 22.20 & 160.18 & 9.88\\
 & 5 & 94.31 & 92.95 & \cellcolor{blue!50}11.92 & \cellcolor{blue!50}35.34 & \cellcolor{blue!50}97.96 & \cellcolor{blue!50}95.3 & 40.60 & 455.33 & 15.72\\
 & 6 & 96.07 & 95.52 & \cellcolor{blue!50}14.88 & \cellcolor{blue!50}53.41 & \cellcolor{blue!50}98.27 & \cellcolor{blue!50}96.03 & 33.40 & 1459.87 & 14.40\\
\hline

\multirow{5}{*}{\shortstack[1]{splice-1}}
& 2 & 84.04 & \cellcolor{blue!50}84.04 & \cellcolor{blue!50}4 & \cellcolor{blue!50}296.61 & \cellcolor{blue!50}84.22 & 83.17 & 6.92 & 555.22 & 2.96\\
 & 3 & 87.25 & 86.94 & \cellcolor{blue!50}5.44 & \cellcolor{blue!50}449.04 & \cellcolor{blue!50}87.79 & \cellcolor{blue!50}87.37 & 11.32 & 1231.59 & 4.64\\
 & 4 & \cellcolor{blue!50}88.3 & \cellcolor{blue!50}88.04 & \cellcolor{blue!50}7.24 & \cellcolor{blue!50}608.3 & 86.52 & 85.64 & 16.60 & 2717.90 & 7.12\\
 & 5 & 71.99 & 70.53 & \cellcolor{blue!50}10.28 & \cellcolor{blue!50}783.9 & \cellcolor{blue!50}77.37 & \cellcolor{blue!50}76.32 & 21.88 & 6226.75 & 9.48\\
 & 6 & \cellcolor{blue!50}62.92 & \cellcolor{blue!50}61.89 & \cellcolor{blue!50}16.28 & \cellcolor{blue!50}996.27 & 60.36 & 58.95 & 29.40 & 15406.05 & 12.28\\
\hline

\multirow{5}{*}{\shortstack[1]{tic-tac-toe}}
  & 2 & 71.05 & \cellcolor{blue!50}68.35 & \cellcolor{blue!50}4 & \cellcolor{blue!50}9.25 & \cellcolor{blue!50}71.1 & 67.49 & 5.96 & 22.31 & 2.48\\
 & 3 & 74.91 & 72.36 & \cellcolor{blue!50}6.16 & \cellcolor{blue!50}15.01 & \cellcolor{blue!50}77.15 & \cellcolor{blue!50}73.55 & 11.48 & 51.98 & 5.20\\
 & 4 & 76.87 & 74.22 & \cellcolor{blue!50}8.84 & \cellcolor{blue!50}22.88 & \cellcolor{blue!50}82.47 & \cellcolor{blue!50}78.68 & 20.60 & 125.10 & 8.44\\
 & 5 & 81.86 & \cellcolor{blue!50}80.31 & \cellcolor{blue!50}13.88 & \cellcolor{blue!50}35.67 & \cellcolor{blue!50}83.08 & 79.50 & 28.44 & 328.33 & 11.16\\
 & 6 & \cellcolor{blue!50}84.82 & 80.08 & \cellcolor{blue!50}24.16 & \cellcolor{blue!50}59.52 & 84.25 & \cellcolor{blue!50}80.86 & 38.12 & 979.46 & 13.24\\
\hline

\multirow{5}{*}{\shortstack[1]{vote}}
 & 2 & 95.68 & \cellcolor{blue!50}95.22 & \cellcolor{blue!50}3.76 & \cellcolor{blue!50}7.2 & \cellcolor{blue!50}96.21 & 95.03 & 7 & 18.33 & 3\\
 & 3 & 96.69 & \cellcolor{blue!50}94.57 & \cellcolor{blue!50}5.56 & \cellcolor{blue!50}11.38 & \cellcolor{blue!50}97.39 & 93.79 & 13.96 & 46.55 & 6.04\\
 & 4 & 97.40 & 94.39 & \cellcolor{blue!50}8.16 & \cellcolor{blue!50}16.49 & \cellcolor{blue!50}98.62 & \cellcolor{blue!50}94.57 & 21.16 & 126.45 & 9.32\\
 & 5 & 98.21 & \cellcolor{blue!50}94.57 & \cellcolor{blue!50}12.4 & \cellcolor{blue!50}23.83 & \cellcolor{blue!50}99.47 & 93.84 & 30.52 & 381.95 & 12.96\\
 & 6 & 98.93 & 93.98 & \cellcolor{blue!50}18.44 & \cellcolor{blue!50}36.2 & \cellcolor{blue!50}99.62 & \cellcolor{blue!50}94.76 & 35.40 & 1292.44 & 14.88\\
\hline

    \end{tabular}
    \caption{\label{tab:exp3_maxsatbdd_maxsatdt_all}
    \footnotesize{Comparison between \MaxSat-\Bdd{} and \MaxSat-\DT{}.}
    }
\end{table}


\subsection{Evaluation of a Heuristic \MaxSat-\Bdd{} method}

To increase the scalability of our model, we propose a simple heuristic version of \MaxSat-\Bdd{}. 
The idea is to perform as a pre-processing step to choose a subset of (important) features that are used exclusively in the \MaxSat-\Bdd{} model.
By doing this, the search space is greatly reduced by focusing only on the selected features. 
We chose to run $\cart{}$~\cite{DBLP:books/wa/BreimanFOS84} (an efficient and scalable heuristic for learning decision trees) to build quickly a decision tree. 
The features selected in our heuristic method are the ones used in the decision tree found by $\cart{}$. 

{This experimental study follows the same protocol as the previous one.} 
{The results are detailed in Figure~\ref{fig:exp4_combined} and in Table~\ref{tab:annex2_cart_hmaxsatbdd_maxsatbdd}}.


As expected, the size of the encoding of the model is in favor of the heuristic approach. This is shown in the two columns \textbf{E\_size} of Table~\ref{tab:annex2_cart_hmaxsatbdd_maxsatbdd}.
This advantage gives our approach the key to handle larger problems.
Column \textbf{Opt} of Table~\ref{tab:annex2_cart_hmaxsatbdd_maxsatbdd} indicates the percentage of instances solved to optimality. 
It should be noted that proving optimality is much easier with the heuristic approach since the problem is naturally easier to solve with fewer features.


The results of the average testing accuracy are shown in the left scatter of Figure~\ref{fig:exp4_combined}.
Our heuristic approach is clearly very competitive to the exact \MaxSat-\Bdd{} in terms of learning generalization.
This is particularly true for datasets with a large number of features. 
Indeed, the proposed heuristic approach obtains better prediction performance than the exact one within the same limited resources (time and memory). 
The middle and right scatters of Figure \ref{fig:exp4_combined} show the comparison of the average training and testing accuracy between the heuristic approach and \cart{}.
It is clear that \cart{} almost always gets better training accuracy. 
However, the heuristic \MaxSat-\Bdd{} is still competitive in terms of generalisation.


\begin{figure*}[ht!]
    \centering
    \begin{minipage}{0.31\linewidth}
        \centering
        \includegraphics[width=.8\linewidth, height=.7\linewidth]
        {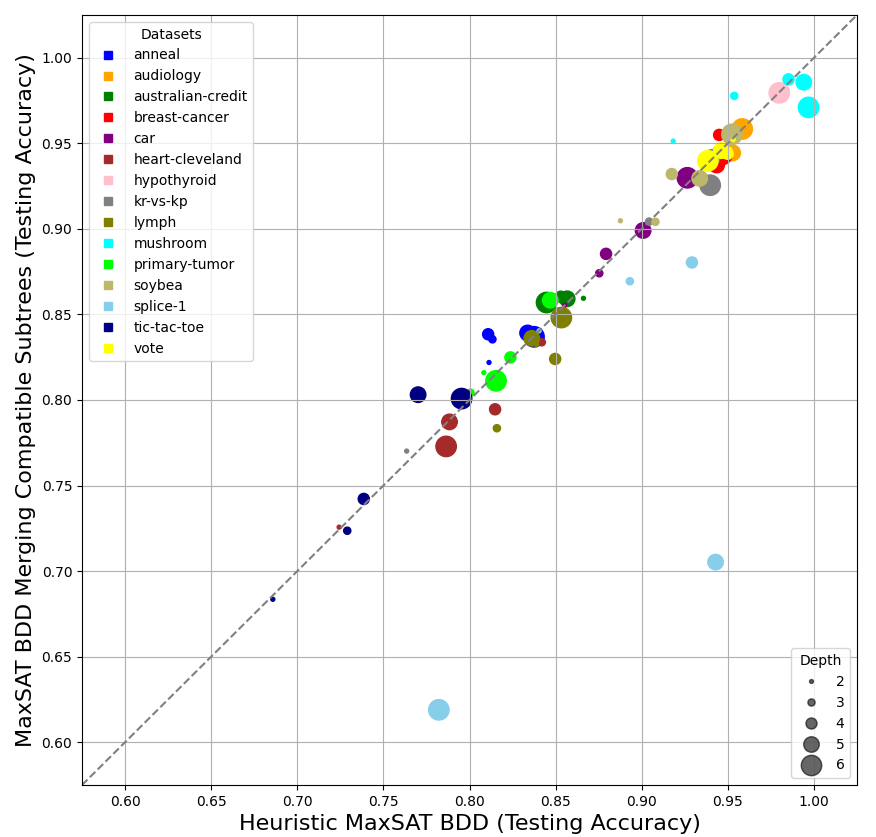}
    \end{minipage}\hfill
    \begin{minipage}{0.31\linewidth}
        \centering
        \includegraphics[width=.8\linewidth, height=.7\linewidth]
        {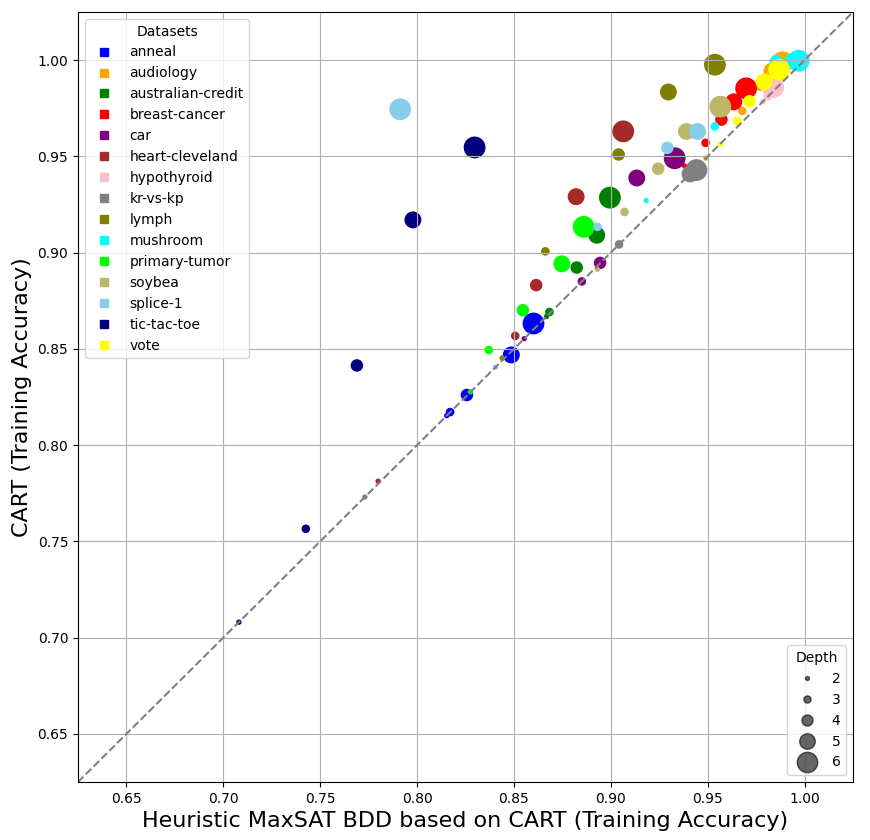}
    \end{minipage}\hfill
    \begin{minipage}{0.31\linewidth}
        \centering
        \includegraphics[width=.8\linewidth, height=.7\linewidth]
        {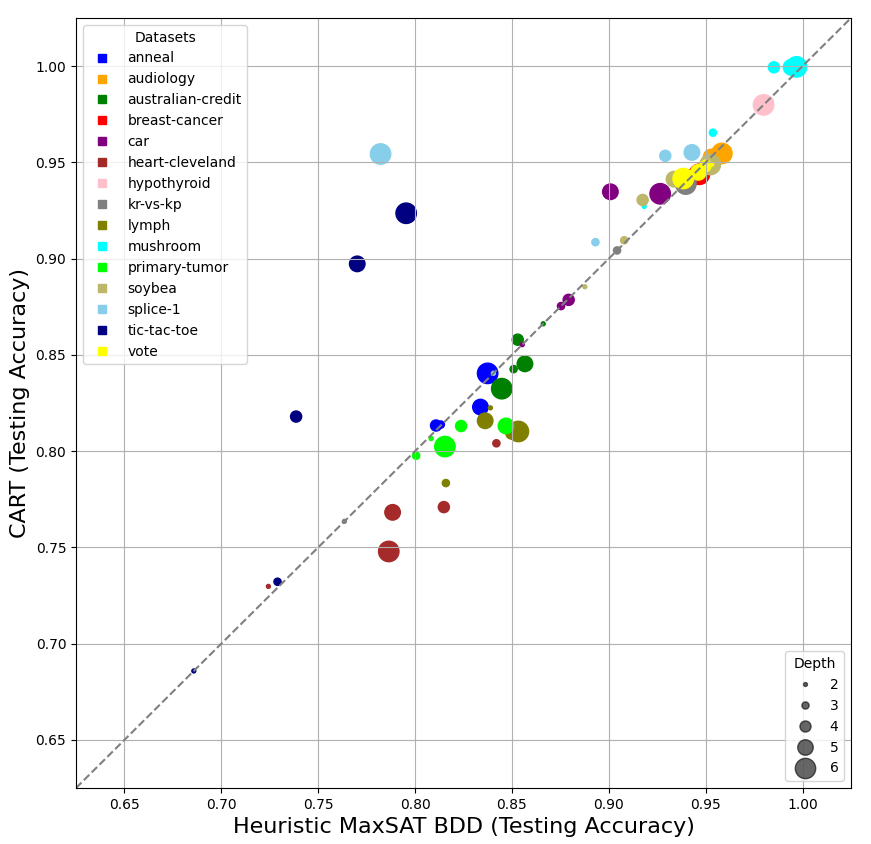}
    \end{minipage}
    \caption{{Left: comparison of Heuristic \MaxSat-\Bdd{} and Excact \MaxSat-\Bdd{} in testing accuracy. Middle: comparison of the training accuracy of Heuristic \MaxSat-\Bdd{} and \cart{}. Right: comparison of the training accuracy of Heuristic \MaxSat-\Bdd{} and \cart{}}}
    \label{fig:exp4_combined}
\end{figure*}

\begin{table*}[htb!]
    \centering
    \renewcommand{\arraystretch}{0.4}
    \setlength{\tabcolsep}{5pt}
    \scriptsize
    \begin{tabular}{|c|c||c|c|c|c||c|c|c|c|c|c||c|c|c|c|c|c||}
    \hline
    \multirow{2}{*}{\textbf{Datasets}} & 
    \multirow{2}{*}{{\textbf{d}}} &
    \multicolumn{4}{c||}{{\textbf{CART}}} & 
    \multicolumn{6}{c||}{{\textbf{Heuristic \MaxSat-\Bdd{}}}} &
    \multicolumn{6}{c||}{{\textbf{\MaxSat-\Bdd{}}}} \\
    \cline{3-18}
    & & 
    \textbf{Train} &
    \textbf{Test} &
    \textbf{Size} &
    \textbf{F\_d} &
    \textbf{Opt} &
    \textbf{Train} &
    \textbf{Test} &
    \textbf{Size} &
    \textbf{E\_size} &
    \textbf{Time} &
   \textbf{Opt} &
    \textbf{Train} &
    \textbf{Test} &
    \textbf{Size} &
    \textbf{E\_size} &
    \textbf{Time}
    \\
    \hline
    
    \multirow{5}{*}{anneal}
 & 2 & 81.53 & 81.21 & 6.12 & 2.56 & \cellcolor{blue!50}100 & 81.53 & 81.13 & \cellcolor{blue!50}3.56 & \cellcolor{blue!50}1.45 & \cellcolor{blue!50}0.13 & \cellcolor{blue!50}100 & \cellcolor{blue!50}82.92 & \cellcolor{blue!50}82.19 & 5 & 24.09 & 92.93\\
 & 3 & 81.72 & 81.38 & 11.08 & 4.92 & \cellcolor{blue!50}100 & 81.71 & 81.33 & \cellcolor{blue!50}5.24 & \cellcolor{blue!50}4.03 & \cellcolor{blue!50}1.64 & 0 & \cellcolor{blue!50}84 & \cellcolor{blue!50}83.55 & 7 & 37.21 & TO \\
 & 4 & 82.60 & 81.33 & 18.04 & 8.40 & \cellcolor{blue!50}100 & 82.57 & 81.08 & \cellcolor{blue!50}7.08 & \cellcolor{blue!50}9.64 & \cellcolor{blue!50}109.65 & 0 & \cellcolor{blue!50}84.58 & \cellcolor{blue!50}83.84 & 9.40 & 52.06 & TO \\
 & 5 & 84.69 & 82.29 & 27.88 & 12.32 & \cellcolor{blue!50}12 & 84.86 & 83.37 & \cellcolor{blue!50}11.12 & \cellcolor{blue!50}20.62 & \cellcolor{blue!50}780.08 & 0 & \cellcolor{blue!50}85.33 & \cellcolor{blue!50}83.92 & 11.72 & 71.08 & TO \\
 & 6 & \cellcolor{blue!50}86.32 & \cellcolor{blue!50}84.04 & 39.80 & 17 & 0 & 86.01 & 83.74 & \cellcolor{blue!50}13.56 & \cellcolor{blue!50}42.6 & \cellcolor{blue!50}845.72 & 0 & 86.26 & 83.70 & 14.68 & 99.47 & TO \\
\hline

\multirow{5}{*}{audiology}
 & 2 & \cellcolor{blue!50}94.91 & \cellcolor{blue!50}94.92 & 5 & 2 & \cellcolor{blue!50}100 & \cellcolor{blue!50}94.91 & \cellcolor{blue!50}94.92 & \cellcolor{blue!50}4 & \cellcolor{blue!50}0.35 & \cellcolor{blue!50}0.01 & \cellcolor{blue!50}100 & \cellcolor{blue!50}94.91 & \cellcolor{blue!50}94.92 & \cellcolor{blue!50}4 & 10.59 & 0.46\\
 & 3 & \cellcolor{blue!50}97.36 & 94.82 & 9 & 4 & \cellcolor{blue!50}100 & 96.78 & 95.38 & \cellcolor{blue!50}5.04 & \cellcolor{blue!50}1 & \cellcolor{blue!50}0.02 & \cellcolor{blue!50}100 & 96.78 & \cellcolor{blue!50}95.84 & \cellcolor{blue!50}5.04 & 16.41 & 6.63\\
 & 4 & \cellcolor{blue!50}98.73 & 95.37 & 13.08 & 6 & \cellcolor{blue!50}100 & 97.73 & \cellcolor{blue!50}95.56 & 7.04 & \cellcolor{blue!50}2.27 & \cellcolor{blue!50}0.08 & \cellcolor{blue!50}100 & 97.73 & \cellcolor{blue!50}95.56 & \cellcolor{blue!50}6.96 & 22.56 & 56.31\\
 & 5 & \cellcolor{blue!50}99.42 & \cellcolor{blue!50}95.28 & 17.08 & 8 & \cellcolor{blue!50}100 & 98.31 & \cellcolor{blue!50}95.28 & \cellcolor{blue!50}9.76 & \cellcolor{blue!50}4.8 & \cellcolor{blue!50}0.49 & 72 & 98.40 & 94.44 & 9.88 & 29.82 & 578.99\\
 & 6 & \cellcolor{blue!50}99.88 & 95.47 & 19.08 & 9 & \cellcolor{blue!50}100 & 98.87 & \cellcolor{blue!50}95.84 & \cellcolor{blue!50}13 & \cellcolor{blue!50}9.78 & \cellcolor{blue!50}2.13 & 48 & 99.17 & \cellcolor{blue!50}95.84 & 14.28 & 39.59 & 613.06\\
\hline

\multirow{5}{*}{australian}
 & 2 & 86.68 & \cellcolor{blue!50}86.62 & 7 & 3 & \cellcolor{blue!50}100 & 86.68 & \cellcolor{blue!50}86.62 & 4.92 & \cellcolor{blue!50}1.26 & \cellcolor{blue!50}0.09 & \cellcolor{blue!50}100 & \cellcolor{blue!50}86.7 & 85.94 & \cellcolor{blue!50}4.72 & 26.79 & 167.99\\
 & 3 & 86.91 & 84.26 & 13.08 & 6 & \cellcolor{blue!50}100 & 86.83 & \cellcolor{blue!50}85.09 & 5.48 & \cellcolor{blue!50}3.59 & \cellcolor{blue!50}2.41 & 0 & \cellcolor{blue!50}87.45 & 84.81 & \cellcolor{blue!50}5.32 & 41.15 & TO \\
 & 4 & \cellcolor{blue!50}89.23 & 85.79 & 24.92 & 11.84 & \cellcolor{blue!50}84 & 88.24 & 85.30 & \cellcolor{blue!50}6.8 & \cellcolor{blue!50}9.22 & \cellcolor{blue!50}536.16 & 0 & 88.45 & \cellcolor{blue!50}86.03 & 7.40 & 56.85 & TO \\
 & 5 & \cellcolor{blue!50}90.9 & 84.53 & 41.64 & 19.24 & 0 & 89.27 & 85.67 & 10.64 & \cellcolor{blue!50}20.28 & \cellcolor{blue!50}845.28 & 0 & 89.36 & \cellcolor{blue!50}85.91 & \cellcolor{blue!50}10.44 & 75.90 & TO \\
 & 6 & \cellcolor{blue!50}92.86 & 83.24 & 64.28 & 28.84 & 0 & 89.95 & 84.47 & \cellcolor{blue!50}16.12 & \cellcolor{blue!50}41.85 & \cellcolor{blue!50}TO  & 0 & 90.05 & \cellcolor{blue!50}85.7 & 17.32 & 102.49 & \cellcolor{blue!50}TO \\
\hline

\multirow{5}{*}{cancer}
 & 2 & \cellcolor{blue!50}94.5 & \cellcolor{blue!50}93.91 & 7 & 3 & \cellcolor{blue!50}100 & 93.81 & 93.59 & \cellcolor{blue!50}4 & \cellcolor{blue!50}1.32 & \cellcolor{blue!50}0.06 & \cellcolor{blue!50}100 & 93.88 & 93.59 & \cellcolor{blue!50}4 & 20.29 & 5.89\\
 & 3 & \cellcolor{blue!50}95.7 & \cellcolor{blue!50}94.41 & 13.24 & 6.08 & \cellcolor{blue!50}100 & 94.89 & 94.14 & \cellcolor{blue!50}5.64 & \cellcolor{blue!50}3.78 & \cellcolor{blue!50}0.52 & \cellcolor{blue!50}100 & 95.02 & 93.91 & 5.84 & 31.37 & 525.59\\
 & 4 & \cellcolor{blue!50}96.91 & 94.26 & 21.08 & 9.88 & \cellcolor{blue!50}100 & 95.71 & 94.50 & \cellcolor{blue!50}7.8 & \cellcolor{blue!50}8.77 & \cellcolor{blue!50}20.44 & 0 & 96.06 & \cellcolor{blue!50}95.49 & 7.96 & 43.89 & TO \\
 & 5 & \cellcolor{blue!50}97.83 & 94.20 & 30.36 & 14.04 & \cellcolor{blue!50}60 & 96.35 & \cellcolor{blue!50}94.35 & 10.92 & \cellcolor{blue!50}18.32 & \cellcolor{blue!50}637.98 & 0 & 95.94 & 93.74 & \cellcolor{blue!50}10.68 & 59.91 & TO \\
 & 6 & \cellcolor{blue!50}98.54 & 94.38 & 38.84 & 17.68 & 0 & 96.98 & \cellcolor{blue!50}94.67 & 15.20 & \cellcolor{blue!50}36.33 & \cellcolor{blue!50}864.36 & 0 & 96.84 & 94.35 & \cellcolor{blue!50}14.8 & 83.83 & TO \\
\hline

\multirow{5}{*}{car}
 & 2 & \cellcolor{blue!50}85.53 & \cellcolor{blue!50}85.53 & 5 & 2 & \cellcolor{blue!50}100 & \cellcolor{blue!50}85.53 & \cellcolor{blue!50}85.53 & \cellcolor{blue!50}4 & \cellcolor{blue!50}2.77 & \cellcolor{blue!50}0.14 & \cellcolor{blue!50}100 & \cellcolor{blue!50}85.53 & \cellcolor{blue!50}85.53 & \cellcolor{blue!50}4 & 13.32 & 24.82\\
 & 3 & \cellcolor{blue!50}88.5 & \cellcolor{blue!50}87.53 & 7 & 3 & \cellcolor{blue!50}100 & \cellcolor{blue!50}88.5 & \cellcolor{blue!50}87.53 & \cellcolor{blue!50}5 & \cellcolor{blue!50}6.94 & \cellcolor{blue!50}0.74 & 8 & 88.40 & 87.41 & 5.08 & 21.95 & TO \\
 & 4 & 89.46 & 87.86 & 11 & 5 & \cellcolor{blue!50}100 & 89.45 & 87.93 & \cellcolor{blue!50}6.4 & \cellcolor{blue!50}16.64 & \cellcolor{blue!50}14.83 & 0 & \cellcolor{blue!50}89.84 & \cellcolor{blue!50}88.54 & 6.84 & 34.44 & TO \\
 & 5 & \cellcolor{blue!50}93.88 & \cellcolor{blue!50}93.47 & 18.20 & 7.80 & \cellcolor{blue!50}24 & 91.34 & 90.08 & \cellcolor{blue!50}9.24 & \cellcolor{blue!50}37.43 & \cellcolor{blue!50}843.14 & 0 & 91.13 & 89.91 & 9.60 & 55.79 & TO \\
 & 6 & \cellcolor{blue!50}94.9 & \cellcolor{blue!50}93.37 & 28.68 & 10.32 & 0 & 93.30 & 92.65 & \cellcolor{blue!50}11.76 & \cellcolor{blue!50}79.23 & \cellcolor{blue!50}TO  & 0 & 93.51 & 92.99 & 13.36 & 97.06 & \cellcolor{blue!50}TO \\
\hline

\multirow{5}{*}{cleveland}
 & 2 & 78.13 & \cellcolor{blue!50}72.97 & 7 & 2.72 & \cellcolor{blue!50}100 & 77.99 & 72.43 & \cellcolor{blue!50}3.76 & \cellcolor{blue!50}0.55 & \cellcolor{blue!50}0.04 & \cellcolor{blue!50}100 & \cellcolor{blue!50}79.04 & 72.57 & 4 & 9.48 & 83.84\\
 & 3 & \cellcolor{blue!50}85.68 & 80.41 & 15 & 6.24 & \cellcolor{blue!50}100 & 85.07 & \cellcolor{blue!50}84.2 & \cellcolor{blue!50}6 & \cellcolor{blue!50}1.68 & \cellcolor{blue!50}2.28 & 0 & 85.07 & 83.37 & \cellcolor{blue!50}6 & 14.73 & TO \\
 & 4 & \cellcolor{blue!50}88.31 & 77.09 & 29.96 & 13 & \cellcolor{blue!50}24 & 86.15 & \cellcolor{blue!50}81.49 & \cellcolor{blue!50}7.6 & \cellcolor{blue!50}4.46 & \cellcolor{blue!50}811.89 & 0 & 86.32 & 79.46 & 7.84 & 20.55 & TO \\
 & 5 & \cellcolor{blue!50}92.9 & 76.82 & 49.88 & 21.36 & 0 & 88.21 & \cellcolor{blue!50}78.84 & 13.24 & \cellcolor{blue!50}9.82 & \cellcolor{blue!50}862.94 & 0 & 88.65 & 78.72 & \cellcolor{blue!50}13.08 & 27.89 & TO \\
 & 6 & \cellcolor{blue!50}96.3 & 74.79 & 67.80 & 28.92 & 0 & 90.64 & \cellcolor{blue!50}78.64 & \cellcolor{blue!50}20.2 & \cellcolor{blue!50}19.2 & \cellcolor{blue!50}TO  & 0 & 90.74 & 77.29 & 21.04 & 38.66 & \cellcolor{blue!50}TO \\
\hline

\multirow{5}{*}{hypothyroid}
 & 2 & \cellcolor{blue!50}97.84 & \cellcolor{blue!50}97.84 & 6.92 & 2.96 & \cellcolor{blue!50}100 & \cellcolor{blue!50}97.84 & \cellcolor{blue!50}97.84 & \cellcolor{blue!50}4 & \cellcolor{blue!50}6.2 & \cellcolor{blue!50}0.43 & \cellcolor{blue!50}100 & \cellcolor{blue!50}97.84 & \cellcolor{blue!50}97.84 & \cellcolor{blue!50}4 & 92.65 & 77.76\\
 & 3 & \cellcolor{blue!50}98.13 & 97.86 & 12.84 & 5.52 & \cellcolor{blue!50}100 & 98.09 & 97.99 & 5.16 & \cellcolor{blue!50}16.95 & \cellcolor{blue!50}4.62 & 0 & 98.09 & \cellcolor{blue!50}98.04 & \cellcolor{blue!50}5.12 & 142.78 & TO \\
 & 4 & \cellcolor{blue!50}98.39 & 98.15 & 22.04 & 9.80 & \cellcolor{blue!50}100 & 98.28 & \cellcolor{blue!50}98.2 & \cellcolor{blue!50}6.56 & \cellcolor{blue!50}41.23 & \cellcolor{blue!50}262.45 & 0 & 98.27 & 98.13 & 6.72 & 200.09 & TO \\
 & 5 & \cellcolor{blue!50}98.48 & 98.04 & 31.72 & 14.24 & 0 & 98.32 & \cellcolor{blue!50}98.07 & \cellcolor{blue!50}8.84 & \cellcolor{blue!50}87.02 & \cellcolor{blue!50}TO  & 0 & 98.30 & 98.05 & 9.28 & 274.03 & \cellcolor{blue!50}TO \\
 & 6 & \cellcolor{blue!50}98.6 & \cellcolor{blue!50}97.99 & 43.56 & 18.92 & 0 & 98.37 & \cellcolor{blue!50}97.99 & \cellcolor{blue!50}13.32 & \cellcolor{blue!50}175.62 & \cellcolor{blue!50}TO  & 0 & 98.37 & 97.95 & 13.68 & 385.40 & \cellcolor{blue!50}TO \\
\hline

\multirow{5}{*}{kr-vs-kp}
 & 2 & 77.30 & 76.35 & 5 & 2 & \cellcolor{blue!50}100 & 77.30 & 76.35 & \cellcolor{blue!50}4 & \cellcolor{blue!50}5.12 & \cellcolor{blue!50}0.55 & 0 & \cellcolor{blue!50}77.83 & \cellcolor{blue!50}77.01 & \cellcolor{blue!50}4 & 77.88 & TO \\
 & 3 & \cellcolor{blue!50}90.43 & \cellcolor{blue!50}90.43 & 8.44 & 3.72 & \cellcolor{blue!50}100 & \cellcolor{blue!50}90.43 & \cellcolor{blue!50}90.43 & 5.40 & \cellcolor{blue!50}13.92 & \cellcolor{blue!50}5.77 & 0 & \cellcolor{blue!50}90.43 & \cellcolor{blue!50}90.43 & \cellcolor{blue!50}5.28 & 120.54 & TO \\
 & 4 & \cellcolor{blue!50}94.09 & \cellcolor{blue!50}94.09 & 13.88 & 6.44 & \cellcolor{blue!50}100 & \cellcolor{blue!50}94.09 & \cellcolor{blue!50}94.09 & 7.68 & \cellcolor{blue!50}33.69 & \cellcolor{blue!50}56.23 & 0 & \cellcolor{blue!50}94.09 & \cellcolor{blue!50}94.09 & \cellcolor{blue!50}7.56 & 170.28 & TO \\
 & 5 & 94.09 & 94.09 & 21.88 & 9.96 & \cellcolor{blue!50}20 & 94.09 & 94.09 & \cellcolor{blue!50}8.48 & \cellcolor{blue!50}74.68 & \cellcolor{blue!50}795.33 & 0 & \cellcolor{blue!50}94.34 & \cellcolor{blue!50}94.18 & 9.52 & 236.39 & TO \\
 & 6 & 94.29 & 93.87 & 31.32 & 14.04 & 0 & \cellcolor{blue!50}94.42 & \cellcolor{blue!50}93.97 & 11.80 & \cellcolor{blue!50}157.84 & \cellcolor{blue!50}846.3 & 0 & 92.80 & 92.55 & \cellcolor{blue!50}11.52 & 339.35 & TO \\
\hline

\multirow{5}{*}{lymph}
 & 2 & \cellcolor{blue!50}84.53 & 82.25 & 7 & 3 & \cellcolor{blue!50}100 & 84.39 & \cellcolor{blue!50}83.89 & \cellcolor{blue!50}4 & \cellcolor{blue!50}0.29 & \cellcolor{blue!50}0.01 & \cellcolor{blue!50}100 & 84.46 & 83.23 & \cellcolor{blue!50}4 & 3.50 & 2.84\\
 & 3 & \cellcolor{blue!50}90.07 & 78.34 & 14.92 & 6.52 & \cellcolor{blue!50}100 & 86.62 & \cellcolor{blue!50}81.59 & 6 & \cellcolor{blue!50}0.88 & \cellcolor{blue!50}0.29 & 32 & 86.76 & 78.35 & \cellcolor{blue!50}5.92 & 5.55 & 829.42\\
 & 4 & \cellcolor{blue!50}95.1 & 80.91 & 25.80 & 11.76 & \cellcolor{blue!50}100 & 90.40 & \cellcolor{blue!50}84.97 & \cellcolor{blue!50}8.36 & \cellcolor{blue!50}2.16 & \cellcolor{blue!50}17.38 & 0 & 90.54 & 82.40 & 8.72 & 7.86 & TO \\
 & 5 & \cellcolor{blue!50}98.34 & 81.58 & 33.40 & 14.96 & \cellcolor{blue!50}88 & 92.97 & \cellcolor{blue!50}83.62 & \cellcolor{blue!50}12.88 & \cellcolor{blue!50}4.21 & \cellcolor{blue!50}277.57 & 0 & 93.51 & 83.60 & 13.52 & 10.94 & TO \\
 & 6 & \cellcolor{blue!50}99.76 & 81.02 & 36.76 & 15.76 & \cellcolor{blue!50}40 & 95.37 & \cellcolor{blue!50}85.33 & \cellcolor{blue!50}17.64 & \cellcolor{blue!50}7.78 & \cellcolor{blue!50}626.2 & 0 & 95.88 & 84.82 & \cellcolor{blue!50}17.64 & 15.74 & TO \\
\hline

\multirow{5}{*}{mushroom}
 & 2 & 92.71 & 92.71 & 7 & 3 & \cellcolor{blue!50}100 & 91.83 & 91.83 & \cellcolor{blue!50}4 & \cellcolor{blue!50}15.61 & \cellcolor{blue!50}6.82 & \cellcolor{blue!50}100 & \cellcolor{blue!50}95.13 & \cellcolor{blue!50}95.13 & \cellcolor{blue!50}4 & 299.19 & 425.74\\
 & 3 & 96.56 & 96.54 & 11 & 5 & \cellcolor{blue!50}100 & 95.37 & 95.37 & \cellcolor{blue!50}5.8 & \cellcolor{blue!50}40.33 & \cellcolor{blue!50}23.11 & 0 & \cellcolor{blue!50}97.74 & \cellcolor{blue!50}97.77 & 6.80 & 458.10 & TO \\
 & 4 & \cellcolor{blue!50}99.95 & \cellcolor{blue!50}99.94 & 16.92 & 7.96 & \cellcolor{blue!50}100 & 98.52 & 98.52 & \cellcolor{blue!50}8.24 & \cellcolor{blue!50}93.45 & \cellcolor{blue!50}89.39 & 0 & 98.78 & 98.74 & 9 & 635.09 & TO \\
 & 5 & \cellcolor{blue!50}99.96 & \cellcolor{blue!50}99.94 & 18.92 & 8.96 & \cellcolor{blue!50}96 & 99.41 & 99.41 & \cellcolor{blue!50}11.12 & \cellcolor{blue!50}183.13 & \cellcolor{blue!50}458.08 & 0 & 98.63 & 98.57 & 11.32 & 853.68 & TO \\
 & 6 & \cellcolor{blue!50}99.97 & \cellcolor{blue!50}99.96 & 20.92 & 9.76 & \cellcolor{blue!50}4 & 99.70 & 99.69 & \cellcolor{blue!50}13.32 & \cellcolor{blue!50}367.46 & \cellcolor{blue!50}821.83 & \cellcolor{blue!50}4 & 97.28 & 97.10 & 14.60 & 1165.88 & 855.34\\
\hline

\multirow{5}{*}{tumor}
 & 2 & 82.77 & 80.65 & 7 & 3 & \cellcolor{blue!50}100 & 82.75 & 80.83 & \cellcolor{blue!50}4 & \cellcolor{blue!50}0.66 & \cellcolor{blue!50}0.05 & \cellcolor{blue!50}100 & \cellcolor{blue!50}82.8 & \cellcolor{blue!50}81.6 & \cellcolor{blue!50}4 & 3.72 & 5.46\\
 & 3 & \cellcolor{blue!50}84.94 & 79.76 & 15 & 6.72 & \cellcolor{blue!50}100 & 83.71 & 80.06 & 5.56 & \cellcolor{blue!50}1.99 & \cellcolor{blue!50}3.11 & 0 & 83.84 & \cellcolor{blue!50}80.43 & \cellcolor{blue!50}5.3 & 6.02 & TO \\
 & 4 & \cellcolor{blue!50}87.01 & 81.30 & 28.68 & 11.96 & \cellcolor{blue!50}76 & 85.46 & 82.38 & \cellcolor{blue!50}8.48 & \cellcolor{blue!50}4.82 & \cellcolor{blue!50}510.07 & 0 & 85.52 & \cellcolor{blue!50}82.49 & 8.64 & 9.04 & TO \\
 & 5 & \cellcolor{blue!50}89.42 & 81.31 & 48.20 & 17.52 & 0 & 87.47 & 84.69 & \cellcolor{blue!50}13.12 & \cellcolor{blue!50}10.06 & \cellcolor{blue!50}TO  & 0 & 87.51 & \cellcolor{blue!50}85.83 & 13.32 & 13.79 & \cellcolor{blue!50}TO \\
 & 6 & \cellcolor{blue!50}91.34 & 80.24 & 68.60 & 21.04 & 0 & 88.60 & \cellcolor{blue!50}81.54 & 20.52 & \cellcolor{blue!50}19.12 & \cellcolor{blue!50}TO  & 0 & 88.57 & 81.12 & \cellcolor{blue!50}19.84 & 22.44 & \cellcolor{blue!50}TO \\
\hline

\multirow{5}{*}{soybean}
 & 2 & 89.13 & 88.54 & 6.92 & 2.76 & \cellcolor{blue!50}100 & 89.30 & 88.76 & \cellcolor{blue!50}4 & \cellcolor{blue!50}1.17 & \cellcolor{blue!50}0.05 & \cellcolor{blue!50}100 & \cellcolor{blue!50}90.48 & \cellcolor{blue!50}90.48 & \cellcolor{blue!50}4 & 10.79 & 9.51\\
 & 3 & \cellcolor{blue!50}92.11 & \cellcolor{blue!50}90.95 & 13.08 & 5.60 & \cellcolor{blue!50}100 & 90.71 & 90.79 & \cellcolor{blue!50}5 & \cellcolor{blue!50}3.34 & \cellcolor{blue!50}1.22 & 68 & 91.39 & 90.41 & 6.52 & 16.99 & 706.39\\
 & 4 & \cellcolor{blue!50}94.36 & 93.05 & 21 & 9.40 & \cellcolor{blue!50}100 & 92.44 & 91.75 & \cellcolor{blue!50}7.92 & \cellcolor{blue!50}7.9 & \cellcolor{blue!50}96.96 & 0 & 93.24 & \cellcolor{blue!50}93.21 & 9.04 & 24.54 & TO \\
 & 5 & \cellcolor{blue!50}96.29 & \cellcolor{blue!50}94.13 & 31.88 & 13.40 & \cellcolor{blue!50}28 & 93.90 & 93.37 & \cellcolor{blue!50}11.08 & \cellcolor{blue!50}16.58 & \cellcolor{blue!50}804.03 & 0 & 94.31 & 92.95 & 11.92 & 35.34 & TO \\
 & 6 & \cellcolor{blue!50}97.58 & 94.89 & 43.48 & 16.88 & 0 & 95.65 & 95.24 & \cellcolor{blue!50}14.56 & \cellcolor{blue!50}33.04 & \cellcolor{blue!50}TO  & 0 & 96.07 & \cellcolor{blue!50}95.52 & 14.88 & 53.41 & \cellcolor{blue!50}TO \\
\hline

\multirow{5}{*}{splice-1}
 & 2 & \cellcolor{blue!50}84.04 & \cellcolor{blue!50}84.04 & 7 & 3 & \cellcolor{blue!50}100 & \cellcolor{blue!50}84.04 & \cellcolor{blue!50}84.04 & \cellcolor{blue!50}4 & \cellcolor{blue!50}6.13 & \cellcolor{blue!50}4.06 & 0 & \cellcolor{blue!50}84.04 & \cellcolor{blue!50}84.04 & \cellcolor{blue!50}4 & 296.61 & TO \\
 & 3 & \cellcolor{blue!50}91.34 & \cellcolor{blue!50}90.85 & 15 & 6.20 & \cellcolor{blue!50}100 & 89.31 & 89.31 & \cellcolor{blue!50}5.32 & \cellcolor{blue!50}17.7 & \cellcolor{blue!50}67.13 & 0 & 87.25 & 86.94 & 5.44 & 449.04 & TO \\
 & 4 & \cellcolor{blue!50}95.44 & \cellcolor{blue!50}95.34 & 28.52 & 10.60 & \cellcolor{blue!50}4 & 92.92 & 92.92 & 7.36 & \cellcolor{blue!50}42.15 & \cellcolor{blue!50}841.79 & 0 & 88.30 & 88.04 & \cellcolor{blue!50}7.24 & 608.30 & TO \\
 & 5 & \cellcolor{blue!50}96.29 & \cellcolor{blue!50}95.52 & 46.92 & 17.20 & 0 & 94.48 & 94.29 & \cellcolor{blue!50}9.68 & \cellcolor{blue!50}93.08 & \cellcolor{blue!50}TO  & 0 & 71.99 & 70.53 & 10.28 & 783.90 & \cellcolor{blue!50}TO \\
 & 6 & \cellcolor{blue!50}97.45 & \cellcolor{blue!50}95.44 & 76.44 & 29.64 & 0 & 79.13 & 78.22 & \cellcolor{blue!50}15.12 & \cellcolor{blue!50}205.48 & \cellcolor{blue!50}TO  & 0 & 62.92 & 61.89 & 16.28 & 996.27 & \cellcolor{blue!50}TO \\
\hline

\multirow{5}{*}{tic-tac-toe}
 & 2 & 70.80 & \cellcolor{blue!50}68.58 & 7 & 2.96 & \cellcolor{blue!50}100 & 70.80 & \cellcolor{blue!50}68.58 & \cellcolor{blue!50}3.76 & \cellcolor{blue!50}1.84 & \cellcolor{blue!50}0.65 & \cellcolor{blue!50}100 & \cellcolor{blue!50}71.05 & 68.35 & 4 & 9.25 & 48.01\\
 & 3 & \cellcolor{blue!50}75.65 & \cellcolor{blue!50}73.21 & 15 & 6.76 & \cellcolor{blue!50}100 & 74.26 & 72.90 & \cellcolor{blue!50}6.04 & \cellcolor{blue!50}5.6 & \cellcolor{blue!50}92.88 & 0 & 74.91 & 72.36 & 6.16 & 15.01 & TO \\
 & 4 & \cellcolor{blue!50}84.14 & \cellcolor{blue!50}81.8 & 27 & 11.04 & 0 & 76.89 & 73.86 & 9.04 & \cellcolor{blue!50}12.99 & \cellcolor{blue!50}TO  & 0 & 76.87 & 74.22 & \cellcolor{blue!50}8.84 & 22.88 & \cellcolor{blue!50}TO \\
 & 5 & \cellcolor{blue!50}91.7 & \cellcolor{blue!50}89.73 & 43.72 & 16 & 0 & 79.79 & 77.02 & 14.24 & \cellcolor{blue!50}27.15 & \cellcolor{blue!50}TO  & 0 & 81.86 & 80.31 & \cellcolor{blue!50}13.88 & 35.67 & \cellcolor{blue!50}TO \\
 & 6 & \cellcolor{blue!50}95.46 & \cellcolor{blue!50}92.36 & 66.68 & 19.28 & 0 & 82.97 & 79.54 & \cellcolor{blue!50}24.04 & \cellcolor{blue!50}52.34 & \cellcolor{blue!50}TO  & 0 & 84.82 & 80.08 & 24.16 & 59.52 & \cellcolor{blue!50}TO \\
\hline

\multirow{5}{*}{vote}
 & 2 & 95.63 & 94.90 & 7 & 3 & \cellcolor{blue!50}100 & 95.66 & \cellcolor{blue!50}95.31 & \cellcolor{blue!50}3.12 & \cellcolor{blue!50}0.85 & \cellcolor{blue!50}0.02 & \cellcolor{blue!50}100 & \cellcolor{blue!50}95.68 & 95.22 & 3.76 & 7.20 & 0.70\\
 & 3 & \cellcolor{blue!50}96.84 & 94.94 & 14.44 & 6.56 & \cellcolor{blue!50}100 & 96.53 & \cellcolor{blue!50}95.03 & \cellcolor{blue!50}5.56 & \cellcolor{blue!50}2.52 & \cellcolor{blue!50}0.35 & \cellcolor{blue!50}100 & 96.69 & 94.57 & \cellcolor{blue!50}5.56 & 11.38 & 94.11\\
 & 4 & \cellcolor{blue!50}97.86 & 94.80 & 24.60 & 10.72 & \cellcolor{blue!50}100 & 97.16 & \cellcolor{blue!50}94.99 & \cellcolor{blue!50}8 & \cellcolor{blue!50}5.86 & \cellcolor{blue!50}13.07 & 8 & 97.40 & 94.39 & 8.16 & 16.49 & TO \\
 & 5 & \cellcolor{blue!50}98.87 & 94.48 & 34.20 & 14.16 & \cellcolor{blue!50}92 & 97.91 & \cellcolor{blue!50}94.62 & \cellcolor{blue!50}12.24 & \cellcolor{blue!50}11.77 & \cellcolor{blue!50}280.12 & 0 & 98.21 & 94.57 & 12.40 & 23.83 & TO \\
 & 6 & \cellcolor{blue!50}99.49 & \cellcolor{blue!50}94.16 & 40.04 & 16.60 & \cellcolor{blue!50}60 & 98.69 & 93.84 & \cellcolor{blue!50}18.08 & \cellcolor{blue!50}22.77 & \cellcolor{blue!50}507.15 & 4 & 98.93 & 93.98 & 18.44 & 36.20 & TO \\
\hline
    
    \end{tabular}
    \caption{\label{tab:annex2_cart_hmaxsatbdd_maxsatbdd}
    \footnotesize{Details of evaluation between \cart{}, heuristic \MaxSat-\Bdd{}, and the original \MaxSat-\Bdd{}.}
    }
\end{table*}





\section{Conclusion}

We propose exact and heuristic methods for optimizing binary decision diagrams (\Bdd{}s) based on the (Maximum) Boolean Satisfiability framework.
Our large experimental studies show clear benefits of the proposed approach in terms of prediction quality and interpretability (compact topology) compared to the existing (heuristic) approaches.

In the future, it would be interesting to extend the proposed approach for multi-valued classification. 
Moreover, a deeper investigation of \Bdd{}s with other interpretable models (such as decision rules and decision sets) is needed for the sake of explainable AI. 



\bibliography{aaai22.bib}

\clearpage
\newpage
\appendix
\section*{Appendix: Algorithm for Producing A \Bdd{} Based on the Beads of a Truth table}

Recall that Proposition \ref{prop:bead_bdd_node} states that there is a one-to-one correspondence between the vertices of a BDD and the beads of the correspondent Boolean function. 
Consider a string $\Truthtable{}$ of length $2^{\depth{}}$ associated to a sequence of variables $[\avarind{1}, \avarind{2}, \dots, \avarind{\depth{}}]$.
This appendix describes the algorithm we used to construct a \Bdd{} of a maximum depth $\depth{}$ using the beads of $\Truthtable{}$.

As the \Bdd{} built is \textit{ordered} and \textit{reduced}, it contains no isomorphic subtrees.
This algorithm creates nodes level by level in a \textit{breadth-first} way. 

The \Bdd{} constructed is defined by \textit{a list of nodes} and \textit{a list of edges}.
Each node is a pair {(\textit{node\_id}, \textit{variable})}. 
The value of \textit{node\_id} is a unique integer called the id of the node which is non-negative for non terminal nodes. There are two sink nodes: 
{($-1, 1$)} associated to the value $1$ (i.e., positive class in the context of binary classification); and  {($-2, 0$)} associated to for the value $0$ (i.e., negative class in binary classification).
Each edge is a tuple {$(p, c , direction)$}, where $p$ is the id of the parent node, $c$ is the if of the child, and $direction \in \{left,right\}$) indicates if $c$ is the left or right child of $p$.


Before we present the algorithm, we show some predefined functions used in the algorithm in Table \ref{tab:lambda_functions_algo}.

\begin{table}[htb]
    \centering
    \small
\setlength{\tabcolsep}{1pt}
    \begin{tabular}{|l||l|}
        \hline
        \footnotesize{\textbf{Function(Input): Output}} & \footnotesize{\textbf{Description}} \\\hline
        $FirstHalf(string~s)$: string & Returns the first half of $s$ \\\hline
        $SecondHalf(string~s)$: string & Returns the second half of $s$ \\\hline
        $IsBead(string~s)$: Boolean & Returns \textit{True} iff $s$ is a bead \\\hline
        $LeadToZero(string~s)$: Boolean & Returns \textit{True} iff $s$ contains only $0$ \\\hline
        $LeadToOne(string~s)$: Boolean & Returns \textit{True} iff $s$ contains only $1$ 
        \\\hline
    \end{tabular}
    \caption{Predefined functions in the algorithm}
    \label{tab:lambda_functions_algo}
\end{table}

The detailed algorithm is described in Algorithm \ref{alg:beads_to_bdd}.
We use a FIFO queue $q$ in the algorithm. 
Each item in the queue follows the format \textit{(str, parent\_id, current\_level, direction)}.
The first item pushed in the queue is a special case associated to the root denoted by $(\Truthtable{}, 0, 1, \emptyset))$ since the root has not parent. 


At each iteration of the main loop, the algorithm pops an element $(s, parent\_id, level, direction)$ from the queue at 
Line~\ref{line:pop}. 
If $s$ is a bead, the algorithm creates a new node at Line~\ref{line:newnode} associated with the level $level$ if $s$ is seen for the first time.  
The set of edges is updated in Line~\ref{line:newedge} accordingly.
The left and right children of $s$ are added in the queue in Lines~\ref{line:left} and ~\ref{line:right}. 

When the current string $s$ is not a Bead of size $>1$, there might be two cases where $s$ leads immediately to a sink node. Either $s$ contains only $0$s or $s$ contains only $1$s. The two cases are handled (according to the size of $s$) in two parts of the algorithm: from Lines~\ref{line:check} to Line~\ref{line:endsink} 
and from Lines~\ref{line:1} to Line~\ref{line:2}.
The case where $s$ is a bead that is not equivalent to a sink node, the set of edges is updated and only one child of $s$ is added to the queue without creating nodes (since $s$ is a bead).
The algorithm ends when all the elements of the queue are treated.

\corrhao{}

\begin{algorithm}[htb]
    \caption{GenBDD$(\Truthtable{}, \mathcal{X})$, an algorithm to construct a \Bdd{} from a given string $\Truthtable{}$ and variable sequence $\mathcal{X}$}\label{alg:beads_to_bdd}
    \begin{algorithmic}[1]
        \REQUIRE{String $\Truthtable{}$, variable sequence $\mathcal{X} = [\avarind{1}, \dots, \avarind{\depth{}}]$.}
        \ENSURE{BDD(\textit{nodes}, \textit{edges})} 
    
        \STATE $nodes \gets \{\}; edges \gets \{\}; T \gets \{\}$
        \STATE $nodes.append((-1, 1)); nodes.append((-2, 0))$
        \STATE $q \gets Queue()$
        \STATE $q.put((\Truthtable{}, 0, 1, \emptyset)))$ 
        
        \WHILE{$not \  q.empty()$}
            \STATE \label{line:pop} $(s, parent\_id, level, direction) \gets q.pop())$
            
            \IF{$Length(s) > 1 \ and  \ IsBead(s)$}
                \STATE // \textit{When the current string $s$ is a Bead.}
                 \IF{${ s \not\in T} $}
                    \STATE // \textit{$s$ is a new string, i.e., not seen before}
                    \STATE $T.append(s)$
                    \STATE $index \gets T.index(s) + 1$
                    \STATE \label{line:newnode} $nodes.append((index, \avarind{level}))$
                \ENDIF
                \STATE $index \gets T.index(s) + 1$
                \IF{ $parent\_id \geq 1$}
                    \STATE  \label{line:newedge} $edges.append(( parent\_id, index, direction))$
                \ENDIF
                \STATE // \textit{Put the left and right child into the queue.}
                \STATE \label{line:left} $q.put((FirstHalf(s), index, level + 1, left))$
                \STATE \label{line:right} $q.put((FirstHalf(s), index, level + 1, right))$

            \ELSIF{ \label{line:check} $Length(s)>1 \textbf{ and not } \ IsBead(s)$}
                \STATE // \textit{When the current string $s$ is \textbf{not} a Bead.}
                \IF{$LeadToOne(s) \textbf{ or } \ LeadToZero(s)$}
                    \STATE // \textit{$s$ leads to sink nodes}
                    \IF{$LeadToOne(s)$}
                        \STATE  $sink \gets -1$
                    \ELSE
                        \STATE  $sink \gets -2$
                    \ENDIF
                    \IF{$p = 0$}
                        \STATE $edges.append((1, sink, left))$
                        \STATE $edges.append((1, sink, right))$
                    \ELSE
                        \STATE $edges.append((parent\_id, sink, direction))$
                    \ENDIF \label{line:endsink}
                \ELSE
                    \STATE // \textit{Otherwise put the left child into the queue.}
                    \STATE \label{line:normalcase} $q.put((FirstHalf(s), parent\_id, level+1 ~, direction))$
                \ENDIF
            \ELSE
                \STATE // \label{line:1} \textit{The current string is a sink node.}
                \IF{$s=1$}
                        \STATE  $sink \gets -1$
                    \ELSE
                        \STATE  $sink \gets -2$
                    \ENDIF
                \STATE \label{line:2} $edges.append(( parent\_id, sink, direction))$
            \ENDIF
            
        \ENDWHILE
    \end{algorithmic}
\end{algorithm}

    


\end{document}